\def\eqref#1{(\ref{#1})}
\def\1{\bm{1}}
\DeclareMathAlphabet{\mathsfit}{\encodingdefault}{\sfdefault}{m}{sl}
\SetMathAlphabet{\mathsfit}{bold}{\encodingdefault}{\sfdefault}{bx}{n}
\newcommand{\E}{\mathbb{E}}
\newcommand{\R}{\mathbb{R}}
\newcommand{\KL}{D_{\mathrm{KL}}}
\theoremstyle{plain}
\newtheorem{theorem}{Theorem}[section]
\theoremstyle{definition}
\newtheorem{definition}[theorem]{Definition}
\theoremstyle{remark}
\renewcommand*\env@matrix[1][\arraystretch]{%
  \edef\arraystretch{1.5}
  \hskip -\arraycolsep
  \let\@ifnextchar\new@ifnextchar
  \array{*\c@MaxMatrixCols c}}
\definecolor{readablegreen}{rgb}{0.0, 0.7, 0.0}
\definecolor{readableblue}{rgb}{0.0, 0.0, 0.8}
\title{Fast and unified path gradient estimators\\ for normalizing flows}
\author{
\centerline{Lorenz Vaitl$^{1*}$, \quad Ludwig Winkler$^{1*}$, \quad Lorenz Richter$^{2,3}$, \quad Pan Kessel$^4$ \quad  }\\
\centerline{$^1$Machine Learning Group, TU Berlin, $^2$Zuse Institute Berlin,}\\
\centerline{$^3$dida Datenschmiede GmbH, $^4$Prescient Design, Genentech, Roche}\\
}
\begin{document}

\maketitle

\begin{abstract}
    Recent work shows that path gradient estimators for normalizing flows have lower variance compared to standard estimators for variational inference, resulting in improved training. However, they are often prohibitively more expensive from a computational point of view and cannot be applied to maximum likelihood training in a scalable manner, which severely hinders their widespread adoption. In this work, we overcome these crucial limitations. Specifically, we propose a fast path gradient estimator which improves computational efficiency significantly and  works for all normalizing flow architectures of practical relevance.
    We then show that this estimator can also be applied to maximum likelihood training for which it has a regularizing effect as it can take the form of a given target energy function into account. We empirically establish its superior performance and reduced variance for several natural sciences applications. 
\end{abstract}

\def\thefootnote{*}\footnotetext{Shared first authorship.}

\section{Introduction}
Normalizing flows (NFs) have become a crucial tool in applications of machine learning in the natural sciences. This is mainly because they can be used for variational inference, i.e., for the approximation of distributions corresponding to given physical energy functions. Furthermore, they can be synergistically combined with more classical sampling methods such as Markov chain Monte Carlo (MCMC) and Molecular Dynamics, as their density is tractable.  \\
The paradigm of using normalizing flows as neural samplers has lately been widely adopted for example in quantum chemistry (Boltzmann generators \citep{noe2019boltzmann}), statistical physics (generalized neural samplers \citep{nicoli2020asymptotically}), as well as high-energy physics (neural trivializing maps \citep{albergo2019flow}). In these applications, the normalizing flow is typically trained using a combination of two training objectives: Reverse Kullback-Leibler (KL) training is used to train the model by self-sampling (see \Cref{sec: normalizing flows}).
Crucially, this training method on its own often fails in high-dimensional sampling settings because self-sampling is unlikely to probe exceedingly concentrated high probability regions of the ground-truth distribution and can potentially lead to mode collapse. As such, reverse KL training is often combined with maximum likelihood training (also known as forward KL training). For this, samples from the ground-truth distribution are obtained by standard sampling methods such as, e.g., MCMC. As these methods are typically costly, the samples are often of low number and possibly biased. The model is then trained to maximize its likelihood with respect to these samples.  This step is essential for guiding the self-sampling towards high probability regions and, by extension, for successful training. \\
Since training normalizing flows for realistic physical examples is typically computationally challenging, methods to speed up the convergence have been a focus of recent research. To this end, \textit{path estimators} for the gradient of the reverse KL loss have been proposed \citep{roeder2017sticking, vaitl2022gradients, vaitl2022path}.
These estimators focus on the parameter dependence of the flow's sampling process, also known as the sampling path, while discarding the direct parameter dependency, which vanishes in expectation. Path gradients have the appealing property that they are unbiased and tend to have lower variance compared to standard estimators, thereby promising accelerated convergence \citep{roeder2017sticking, agrawal2020advances, vaitl2022gradients, vaitl2022path}. At the same time, however, current path gradient estimation schemes have often a runtime that is several multiples of the standard gradient estimator, thus somehow counteracting the original intention. As a remedy, recently, \citet{vaitl2022path} proposed a faster algorithm. Unfortunately, however, this algorithm is limited to continuous normalizing flows.\\

Our work resolves this unsatisfying situation by proposing unified and fast path gradient estimators for all relevant normalizing flow architectures. Notably, our estimators are between 1.5 and 8 times faster than the previous state-of-the-art.
Specifically, we a) derive a recursive equation to calculate the path gradient during the sampling procedure. Further, for flows that are not analytically invertible, we b) demonstrate that implicit differentiation can be used to calculate the path gradient without costly numerical inversion, resulting in significantly improved system size scaling.\\
Finally, we c) prove by a change of perspective (noting that the forward KL divergence in data space is a reverse KL divergence in base space) that our estimators can straightforwardly be used for maximum likelihood training.
Crucially, the resulting estimators allow us to work directly on samples from the target distribution.
As a result of our manuscript, path gradients can now be used for all widely used training objectives --- as opposed to only objectives using self-sampling --- in a unified and scalable manner.\\
We demonstrate the benefits of our proposed estimators for several normalizing flow architectures (RealNVP and gauge-equivariant NCP flow) and  target densities with applications both in machine learning (Gaussian Mixture Model) as well as physics ($U(1)$ gauge theory, and $\phi^4$ lattice model).

\subsection{Related Works}
Pathwise gradients take the sampling path into account and are well established in doubly stochastic optimization, see e.g. \citet{l1991overview, jankowiak2018pathwise, parmas2021unified}.
The present work uses path gradient estimators, a subset of pathwise gradients, originally proposed by \citet{roeder2017sticking} in the context of reverse KL training of Variational Autoencoders (VAE), which is motivated by only using the sampling path for computing gradient estimators and disregarding the direct parameter dependency. These were subsequently generalized by \citet{tucker2018doubly,finke2019importanceweighted, geffner2020difficulty, geffner2021empirical} to generic VAE self-sampling losses. \\
There has been substantial work on reducing gradient variance not by path gradients but with control variates, for example in \citet{miller2017reducing, kool2019buy, richter2020vargrad, wang2023dual}. For an extensive review on the subject, we refer to \citet{mohamed2019monte}.\\
\citet{bauer2021generalized} generalized path gradients to score functions of distributions which do not coincide with the sampling distribution in the context of hierarchical VAEs.
As we will show, our fast path gradient for the forward  KL training can be brought into the same form. However, only our formulation allows the application of a fast estimation scheme for NFs and establishes that forward and reverse path gradients are closely linked. \\
Path gradients for normalizing flows have recently been studied: \citet{agrawal2020advances} were the first to apply path gradients to normalizing flows as part of a broader ablation study.
However, their algorithm has double the runtime and memory constraints as it requires a full copy of the neural network. \citet{vaitl2022gradients} proposed a method that allows path gradient estimation for any explicitly invertible flow at the same runtime cost as \cite{agrawal2020advances} but half the memory footprint. They also proposed an estimator for forward KL training which is however based on reweighting and thus suffers from poor system size scaling,
while our method works on samples from the target density.
For the rather restricted case of continuous normalizing flows, \citet{vaitl2022path} proposed a fast path gradient estimator.
Our proposal unifies their method in a framework which applies across a broad range of normalizing flow types.

\section{Normalizing Flows}
\label{sec: normalizing flows}

A normalizing flow is a composition of diffeomorphisms
\begin{equation}
    \label{eq: composed flow}
    x = T_\theta(x_0) :=  T_{L,\theta_{L}} \circ \dots \circ T_{1,\theta_1}(x_0),
\end{equation}
where we have collectively denoted all parameters of the flow by $\theta := (\theta_1, \dots, \theta_L)$. Since diffeomorphisms form a group under composition, the map $T_\theta$ is a diffeomorphism as well.\\
Samples from a normalizing flow can be drawn by applying $T_\theta$ to samples from a simple base density $x_0 \sim q_0$ such as $q_0 = \mathcal{N}(\mathbf{0}, \mathbbm{1})$. The density of $x=T_\theta(x_0)$, denoted by $q_\theta$, is then given by the pushforward density of $q_0$ under $T_\theta$, i.e.,
\begin{align}
    \log q_{\theta}(x)
     & = \log q_0\left(T^{-1}_\theta(x) \right) + \log \left| \det \frac{\partial T_{\theta}^{-1}(x)}{\partial x} \right| \label{eq:flow_density} \,,
\end{align}
see also \Cref{app: notation} for general remarks on the notation. We focus on applications for which normalizing flows are trained to closely approximate a ground-truth target density
$
    p(x) = \tfrac{1}{\mathcal{Z}} \exp(-E(x)) \,,
$
where the energy $E:\mathbb{R}^d \to \mathbb{R}$ is known in closed-from but the partition function $\mathcal{Z} = \int_{\mathbb{R}^d} e^{-E(x)} \textrm{d}x$ is intractable. To this end, there are two widely established training methods:\vspace{0.2cm}\\
\textbf{Reverse KL training} relies on self-sampling from the flow and minimizes the reverse KL divergence
\begin{align}
    {\KL}(q_\theta, p) = \mathbb{E}_{x \sim q_\theta} \left[ E(x) + \log  q_\theta(x) \right] + \textrm{const.}
\end{align}
Since reverse KL training is based on self-sampling, the flow needs to be evaluated in the base-to-target direction $T_\theta$.\vspace{0.2cm}\\
\textbf{Forward KL training} requires samples from the target density $p$ and is equivalent to maximum likelihood training
\begin{align}
    {\KL}(p, q_\theta) = \mathbb{E}_{x \sim p}\left[-\log q_\theta(x)\right] + \textrm{const.}
\end{align}
Since forward KL training requires the calculation of the density $q_\theta(x)$, the flow needs to be evaluated in the target-to-base direction $T^{-1}_\theta$, see \eqref{eq:flow_density}.\\
As mentioned before, one typically uses a combined forward and reverse training to guide the self-sampling to high probability regions of the target density.
When choosing a normalizing flow architecture for this task, it is therefore essential that both directions $T_\theta$ and $T^{-1}_\theta$ can be evaluated with reasonable efficiency. As a result, the following types of architectures are of practical relevance:\vspace{0.2cm}\\
\textbf{Coupling Flows} are arguable the most widely used (see, e.g., \citet{noe2019boltzmann, albergo2019flow, nicoli2020asymptotically, matthews2022continual, midgley2023se, huang2020augmented}). They split the vector $x_l \in \mathbb{R}^{d}$ in two components
\begin{align}
    x_l = (x_l^{\mathrm{trans}}, x_l^{\mathrm{cond}}) \,,
\end{align}
with $x_l^{\mathrm{trans}} \in \mathbb{R}^k$ and $x_l^{\mathrm{cond}} \in \mathbb{R}^{d-k}$ for $k \in \{1, \dots, d-1\}$. The map $T_{l+1,\theta_{l+1}}$ is then given by
\begin{subequations}
    \begin{align}
        x_{l+1, i}^{\mathrm{trans}} & = f_{\theta, i}(x_{l}^{\mathrm{trans}}, x_l^{\mathrm{cond}}) := \tau(x_{l, i}^{\mathrm{trans}}, h_{\theta, i}(x_l^{\mathrm{cond}})) \,, \quad \quad \forall \, i \in \{1, \dots, k\} \,, \\
        x_{l+1}^{\mathrm{cond}}     & = x_l^{\mathrm{cond}} \,,
    \end{align}
    \label{eq:coupling_trafos}
\end{subequations}
where $f_\theta: \R^k \times \R^{d-k} \to \R^k$, $\tau: \mathbb{R} \times \mathbb{R}^m \to \mathbb{R}$ are invertible maps with respect to their first argument for any choice of the second argument and $h_{\theta,i}: \mathbb{R}^{d-k} \to \mathbb{R}^m$ is the $i$-th output of a neural network. Note that the function $f_\theta$ acts on the components of $x^{\mathrm{trans}}_l$ element-wise.

There are broadly two types of coupling flows with different choices for the transformation $\tau$:
\begin{enumerate}
    \item Explicitly invertible flows have the appealing property that the inverse map $T_{l+1,\theta_{l+1}}^{-1}$ can be calculated in closed-form and as efficiently as the forward map $T_{l+1,\theta_{l+1}}$. A particular example of this type of flows are affine coupling flows \citep{dinh2014nice,dinh2016density} that use an affine transformation $\tau$, i.e.,
          \begin{subequations}
              \begin{align}
                  x^{\mathrm{trans}}_{l+1} & = f_\theta(x^{\mathrm{trans}}_l, x^{\mathrm{cond}}_l) = \sigma_\theta(x^{\mathrm{cond}}_l) \odot x^{\mathrm{trans}}_l + \mu_\theta(x^{\mathrm{cond}}_l) \,, \\
                  x^{\mathrm{cond}}_{l+1}  & = x^{\mathrm{cond}}_l \,,
              \end{align}
              \label{eq:affine_flow}
          \end{subequations}
          with $h_\theta = (\sigma_\theta, \mu_\theta)$. Another example are neural spline flows \citep{durkan2019neural} which use splines instead of an affine transformation.
          \\
    \item Implicitly invertible flows use a map $\tau$ whose inverse can only be obtained numerically, such as a mixture of non-compact projectors \citep{kanwar2020equivariant,rezende2020normalizing} or smooth bump functions \citep{kohler2021smooth}. This often results in more expressive flows in particular in the context of normalizing flows on manifolds \citep{rezende2020normalizing}. Recently, it has been shown in \cite{kohler2021smooth} that implicit differentation can be used to train these types of flows using the forward KL objective.
\end{enumerate}
\textbf{Continuous Normalizing Flows} use an ordinary differential equation (ODE) which relates to the bijection  $T_\theta: \mathbb{R}^d \to \mathbb{R}^d$, allowing for straightforward implementation of equivariances, but typically coming with high computational costs \citep{chen2019neural}.\\
Notably, autoregressive flows \citep{huang2018neural, jaini2019sum} are less relevant in the context of learning a target density $p$ because they only permit fast evaluation in one direction and there is no training method based on implicit differentiation. As a result, they are not considered in this work.

\section{Path Gradients for Reverse KL}
\label{sec: Path Gradients for Reverse KL}

In this section, we introduce path gradients and show how they are related to the gradient of the reverse KL objective. The basic definition of path gradients is as follows:

\begin{definition}
    \label{def: path gradient}
    The path gradient of a function $\varphi(\theta, T_\theta(x_0))$ is given by
    \begin{align}
        \blacktriangledown_\theta \varphi(\theta, T_\theta(x_0)) := \frac{\partial \varphi(\theta, x)}{\partial x}\Big|_{x = T_\theta(x_0)} \frac{\partial T_\theta(x_0)}{\partial \theta}.
    \end{align}
\end{definition}
Note that the total derivative of the function $\varphi$ can be decomposed in the following way:
\begin{align}
    \frac{\mathrm d}{\mathrm  d\theta}  \varphi(\theta, T_\theta(x_0)) = \blacktriangledown_\theta \varphi(\theta, T_\theta(x_0)) + \frac{\partial}{\partial \theta}\varphi(\theta, x)\Big|_{x=T_\theta(x_0)} \,.
\end{align}
The path gradient therefore only takes the parameter dependence of the sampling path $T_\theta$ into account, but does not capture any explicit parameter dependence denoted by the second term.
This decomposition was applied by \cite{roeder2017sticking} to the gradient of the reverse KL divergence
to obtain the notable result
\begin{align}
    \frac{\mathrm d}{\mathrm  d\theta} \KL(q_\theta, p) & = \mathbb{E}_{x_0 \sim q_0} \left[
        \blacktriangledown_\theta \left(  E(T_\theta(x_0)) + \log q_\theta(T_\theta(x_0)) \right) \right],
\end{align}
where we have used the fact that $
    \mathbb{E}_{x_0 \sim q_0} \left[ \frac{\partial}{\partial \theta} \log q_\theta(T_\theta(x_0)) \right] = 0
$. Thus, an unbiased estimator for the gradient of the KL divergence is given by
\begin{align}
    \mathcal{G}_{\mathrm{path}} := \frac{1}{N} \sum_{n=1}^N  \blacktriangledown_\theta \left[  E(T_\theta(x_0^{(n)})) + \log q_\theta(T_\theta(x_0^{(n)})) \right],
\end{align}
where $x_0^{(n)} \sim q_0$ are i.i.d. samples. This path gradient estimator has been observed to have lower variance compared to the standard gradient estimator \citep{roeder2017sticking, tucker2018doubly, agrawal2020advances}. \\
As the total derivative of the energy agrees with the path gradient of the energy function, i.e., $\frac{\mathrm d}{\mathrm  d\theta} E(T_\theta(x_0)) = \blacktriangledown_\theta E(T_\theta(x_0))$, the first term in the estimator can be straightforwardly calculated using automatic differentiation.
The second term, involving the path score $\blacktriangledown_\theta \log q_\theta(T_\theta(x_0))$, is however non-trivial as the path gradient through the sampling path $T_\theta$ has to be disentangled from the explicit parameter dependence in $q_\theta$.
Recently, \citet{vaitl2022gradients} proposed a method to calculate this term using the following steps:
\begin{enumerate}
    \item Sample from the flow without building the computational graph:
          \begin{align}
              x' = \textrm{stop\_gradients}(T_\theta(x_0)) &  & \textrm{for} \;\;\; x_0 \sim q_0 \,. \label{eq:vaitl1}
          \end{align}
    \item  Calculate the gradient of the density with respect to the sample $x'$ using automatic differentiation:
          \begin{align}
              G = \frac{\partial}{\partial x'} \log q_\theta(x') =  \frac{\partial}{\partial x'} \left( \log q_0(T^{-1}_\theta(x')) + \log \det \left| \frac{\partial T^{-1}_\theta(x')}{\partial x'} \right|\right) \,. \label{eq:vaitl2}
          \end{align}

    \item Calculate the path gradient using a vector Jacobian product which can be efficiently calculated by standard reverse-mode automatic differentiation\footnote{Following standard convention in the autograd community, we adopt the convention that $G$ is a row vector. This is because the differential $\mathrm df = \textrm{d}x_i \frac{\partial f}{\partial x_i}$ of a function $f$ is a one-form and thus an element of the co-tangent space.}:
          \begin{align}
              \blacktriangledown_\theta \log q_\theta(T_\theta(x_0)) = G \, \frac{\partial T_\theta(x_0)}{\partial \theta} \,. \label{eq:vaitl3}
          \end{align}
\end{enumerate}

This method therefore requires the evaluation of both directions $T_\theta$ and $T^{-1}_\theta$. For implicitly invertible flows, backpropagation through a numerical inversion per training iteration is thus required, which is often prohibitively expensive.\\
Even in the best case scenario, i.e., for flows that can be evaluated in both directions with the same computational costs, such as RealNVP \citep{dinh2016density}, this algorithm has significant computational overhead. Specifically, it has roughly the costs of five forward passes: one for the sampling \eqref{eq:vaitl1} and two each for the two gradient calculations \eqref{eq:vaitl2} and \eqref{eq:vaitl3} (which each require a forward as well as a backward pass). This is to be contrasted with the costs of the standard gradient estimator which only requires a single forward as well as a backward pass, i.e., has the cost of roughly two forward passes. In practical implementations, typically a runtime overhead of a factor of two instead of $\frac{5}2$ is observed for the path gradient estimator compared to the standard gradient estimator.

\subsection{Fast Path Gradient Estimator}
\label{sec: Fast Path Gradient Estimator}

In the following, we outline a fast method to estimate the path gradient. An important downside of the algorithm outlined in the last section is that one has to evaluate the flow in both directions $T_\theta$ and $T_\theta^{-1}$. The basic idea of the method outlined in the following is to calculate the derivative $\partial_x \log q_\theta(x)$ of the flow model recursively during sampling process. As a result, the flow only needs to be calculated in the forward direction $T_\theta$ as the second step in the path gradient algorithm discussed in the previous section can be avoided. In more detail, the calculation of the path gradient proceeds in two steps:
\begin{enumerate}
    \item The sample $x=T_\theta(x_0)$ and the gradient $G=\frac{\partial}{\partial x} \log q_\theta(x)$ can be calculated alongside the sampling process using the recursive relation derived below.
    \item The path gradient is then calculated with automatic differentiation using a vector Jacobian product, where, however, the forward pass $T_\theta(x_0)$ does not have to be recomputed:
          \begin{align}
              \blacktriangledown_\theta \log q_\theta(T_\theta(x_0)) = G \frac{\partial T_\theta(x_0)}{\partial \theta}\,. \label{eq:vaitl3f} 
          \end{align}
\end{enumerate}
The recursion to calculate the derivative $\partial_x \log q_\theta(x)$ is  as follows:
\vspace{0.1cm}
\begin{restatable}[Gradient recursion]{proposition}{recursivecomposition}
    Using the diffeomorphism $T_l$, the derivative of the induced probability can be computed recursively as follows
    \begin{align}
        \frac{\partial \log q_{\theta,  l+1}(x_{l+1})}{\partial  x_{l+1}} = \frac{\partial \log q_{\theta, l}(x_l)}{\partial  x_{l}} \left( \frac{\partial T_{l,  \theta_l}(x_l)}{\partial x_l} \right)^{-1} - \frac{\partial \log \Big|\det \frac{\partial T_{l, \theta_l}(x_l)}{\partial x_l}\Big|}{\partial x_l}\left( \frac{\partial T_{l, \theta_l}(x_l)}{\partial x_l} \right)^{-1}.
    \end{align}
\end{restatable}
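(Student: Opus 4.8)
The plan is to start from the layer-wise change-of-variables formula and differentiate it with respect to $x_{l+1}$, while carefully tracking that $x_l$ is an implicit function of $x_{l+1}$ through the inverse map $T_{l,\theta_l}^{-1}$. First I would specialize the density formula \eqref{eq:flow_density} to the single diffeomorphism $T_{l,\theta_l}$ relating the two consecutive layers. Writing the pushforward of $q_{\theta,l}$ under $T_{l,\theta_l}$ gives
\begin{align}
    \log q_{\theta,l+1}(x_{l+1}) = \log q_{\theta,l}(x_l) - \log\left|\det \frac{\partial T_{l,\theta_l}(x_l)}{\partial x_l}\right|, \qquad x_l = T_{l,\theta_l}^{-1}(x_{l+1}).
\end{align}

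Next I would differentiate both sides with respect to $x_{l+1}$. Since the right-hand side depends on $x_{l+1}$ only through $x_l$, the chain rule produces a common right factor $\frac{\partial x_l}{\partial x_{l+1}}$ shared by both terms,
\begin{align}
    \frac{\partial \log q_{\theta,l+1}(x_{l+1})}{\partial x_{l+1}} = \left[\frac{\partial \log q_{\theta,l}(x_l)}{\partial x_l} - \frac{\partial}{\partial x_l}\log\left|\det \frac{\partial T_{l,\theta_l}(x_l)}{\partial x_l}\right|\right]\frac{\partial x_l}{\partial x_{l+1}},
\end{align}
where the row-vector convention of the footnote guarantees that the Jacobian factor multiplies from the right, in the correct matrix order.

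The key step is to rewrite the factor $\frac{\partial x_l}{\partial x_{l+1}}$ without ever evaluating $T_{l,\theta_l}^{-1}$ explicitly. By the inverse function theorem applied to $x_l = T_{l,\theta_l}^{-1}(x_{l+1})$, the Jacobian of the inverse is the inverse of the Jacobian,
\begin{align}
    \frac{\partial x_l}{\partial x_{l+1}} = \frac{\partial T_{l,\theta_l}^{-1}(x_{l+1})}{\partial x_{l+1}} = \left(\frac{\partial T_{l,\theta_l}(x_l)}{\partial x_l}\right)^{-1},
\end{align}
with the right-hand Jacobian evaluated at $x_l = T_{l,\theta_l}^{-1}(x_{l+1})$. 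Substituting this expression and distributing it across the bracket yields precisely the two claimed terms, completing the derivation.

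I do not expect a genuine obstacle here: the result is essentially the change-of-variables formula combined with the chain rule and the inverse function theorem. The points that need care, rather than difficulty, are bookkeeping ones: ensuring the inverse Jacobian is consistently applied on the right to match the row-vector gradient convention, and noting that the partial derivative of the log-det term is taken with respect to $x_l$ (the point at which the Jacobian is formed) before the inverse Jacobian acts — both of which fall out automatically from the chain rule. It is worth one sentence to record that the inverse function theorem is applicable because each $T_{l,\theta_l}$ is a diffeomorphism by assumption, so its Jacobian is everywhere invertible and the expression $\left(\frac{\partial T_{l,\theta_l}(x_l)}{\partial x_l}\right)^{-1}$ is well defined.
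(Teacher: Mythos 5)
Your proof is correct and follows essentially the same route as the paper's: the layer-wise change-of-variables formula, differentiation via the chain rule, and the inverse function theorem to replace $\partial x_l/\partial x_{l+1}$ by the inverse of the forward Jacobian. The only cosmetic difference is that you absorb the sign by writing the log-determinant of the forward map from the outset, whereas the paper keeps $\log|\det \partial T_{l,\theta_l}^{-1}/\partial x_{l+1}|$ and converts at the end.
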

For general $T_l$, computing the inverse Jacobian $\left( \nicefrac{\partial T_{l,  \theta_l}(x_l)}{\partial x_l} \right)^{-1}$ entails a time and space complexity higher than $\mathcal O(d)$, which is the complexity of the standard gradient estimator. For  autoregressive flows, the total complexity is $\mathcal O(d^2)$, since its Jacobian is triangular. For coupling-type flows, we can simplify and speed up the recursion to have linear complexity in the number of dimensions, i.e. $\mathcal O(d)$. We state the recursive gradient computations for these kind of flows in the following proposition.
\vspace{0.2cm}
\begin{restatable}[Recursive gradient computations for coupling flows]{proposition}{recursivecoupling}
    \label{prop: Recursive gradient for coupling flows}
    For a coupling flow,
    \begin{align}
        x_{l+1}^{\mathrm{trans}} = f_\theta(x_l^{\mathrm{trans}}, x_l^{\mathrm{cond}}) \quad \textrm{and} \quad x_{l+1}^{\mathrm{cond}} = x_l^{\mathrm{cond}} \,,
    \end{align}
    the derivative of the logarithmic density can be calculated recursively as follows
    \begin{align}
        \begin{split}
            \frac{\partial \log q_{\theta,l+1}(x_{l+1})}{\partial x^{\mathrm{trans}}_{l+1}} = & \frac{\partial \log q_{\theta,l}(x_{l})}{\partial x^{\mathrm{trans}}_{l}} \left(\frac{\partial f_\theta(x^{\mathrm{trans}}_{l}, x^{\mathrm{cond}}_{l})}{\partial x^{\mathrm{trans}}_{l}}\right)^{-1} \\ &-  \frac{\partial}{\partial x^{\mathrm{trans}}_{l}}
            \log \left| \det  \frac{\partial f_\theta(x^{\mathrm{trans}}_{l}, x^{\mathrm{cond}}_l)}{\partial x^{\mathrm{trans}}_{l}} \right|\left(\frac{\partial f_\theta(x^{\mathrm{trans}}_{l}, x^{\mathrm{cond}}_{l})}{\partial x^{\mathrm{trans}}_{l}}\right)^{-1}
            \,,\end{split} \\
        \begin{split}
            \frac{\partial \log q_{\theta,l+1}(x_{l+1})}{\partial x^{\mathrm{cond}}_{l+1}} = & \frac{\partial \log q_{\theta,l}(x_{l})}{\partial x^{\mathrm{\mathrm{cond}}}_{l}}
            - \frac{\partial \log q_{\theta,l+1}(x_{l+1})}{\partial x^{\mathrm{trans}}_{l+1}} \frac{\partial f_\theta(x^{\mathrm{trans}}_{l}, x^{\mathrm{cond}}_l)}{\partial x^{\mathrm{cond}}_{l}}                                                                                   \\
                                                                                             & - \frac{\partial}{\partial x^{\mathrm{cond}}_{l}} \log \left| \det  \frac{\partial f_\theta(x^{\mathrm{trans}}_{l}, x^{\mathrm{cond}}_l)}{\partial x^{\mathrm{trans}}_{l}} \right| \,,\end{split}
    \end{align}
    starting with
    \begin{align}
        \frac{\partial \log q_{\theta,0}(x_{0})}{\partial x^{\mathrm{trans}}_{0}} = \frac{\partial \log q_0(x_{0})}{\partial x^{\mathrm{trans}}_{0}} \,, &  &
        \frac{\partial \log q_{\theta,0}(x_{0})}{\partial x^{\mathrm{cond}}_{0}} = \frac{\partial \log q_0(x_{0})}{\partial x^{\mathrm{cond}}_{0}} \,.
    \end{align}
    \label{eq:recursion_coupling}
\end{restatable}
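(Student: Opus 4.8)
The plan is to specialize the general \emph{Gradient recursion} proposition to the block structure induced by a coupling layer, so that essentially nothing beyond block matrix algebra is needed. First I would record the Jacobian of the coupling map. Ordering both the input $x_l = (x_l^{\mathrm{trans}}, x_l^{\mathrm{cond}})$ and the output coordinates in the same way, the defining equations $x_{l+1}^{\mathrm{trans}} = f_\theta(x_l^{\mathrm{trans}}, x_l^{\mathrm{cond}})$ and $x_{l+1}^{\mathrm{cond}} = x_l^{\mathrm{cond}}$ give the block-triangular form
\begin{align}
    \frac{\partial T_{l,\theta_l}(x_l)}{\partial x_l} = \begin{pmatrix} A & B \\ 0 & I \end{pmatrix}, \qquad A := \frac{\partial f_\theta}{\partial x_l^{\mathrm{trans}}}, \quad B := \frac{\partial f_\theta}{\partial x_l^{\mathrm{cond}}},
\end{align}
where the lower-left block vanishes because $x_{l+1}^{\mathrm{cond}}$ is independent of $x_l^{\mathrm{trans}}$, and the lower-right block is the identity because $x_{l+1}^{\mathrm{cond}} = x_l^{\mathrm{cond}}$. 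Two consequences follow at once: $\det \frac{\partial T_{l,\theta_l}}{\partial x_l} = \det A$, so the log-Jacobian term reduces to $\log\bigl|\det \frac{\partial f_\theta}{\partial x_l^{\mathrm{trans}}}\bigr|$, and the inverse is again block-triangular,
\begin{align}
    \left(\frac{\partial T_{l,\theta_l}(x_l)}{\partial x_l}\right)^{-1} = \begin{pmatrix} A^{-1} & -A^{-1}B \\ 0 & I \end{pmatrix},
\end{align}
which one verifies by a single block multiplication.

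Next I would substitute these into the general recursion. Writing the row-vector gradients $g_l := \frac{\partial \log q_{\theta,l}(x_l)}{\partial x_l}$ and $s := \frac{\partial \log|\det A|}{\partial x_l}$, each split into $\mathrm{trans}$ and $\mathrm{cond}$ sub-row-vectors, the earlier proposition reads $g_{l+1} = (g_l - s)\bigl(\frac{\partial T_{l,\theta_l}}{\partial x_l}\bigr)^{-1}$. Multiplying the partitioned row vector $(g_l^{\mathrm{trans}} - s^{\mathrm{trans}},\, g_l^{\mathrm{cond}} - s^{\mathrm{cond}})$ by the block-triangular inverse and reading off the two blocks gives
\begin{align}
    g_{l+1}^{\mathrm{trans}} &= (g_l^{\mathrm{trans}} - s^{\mathrm{trans}})A^{-1}, \\
    g_{l+1}^{\mathrm{cond}} &= -(g_l^{\mathrm{trans}} - s^{\mathrm{trans}})A^{-1}B + g_l^{\mathrm{cond}} - s^{\mathrm{cond}}.
\end{align}
The first line is verbatim the claimed $\mathrm{trans}$ recursion. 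For the second line I would observe that the prefactor $(g_l^{\mathrm{trans}} - s^{\mathrm{trans}})A^{-1}$ is exactly $g_{l+1}^{\mathrm{trans}}$, so the coupling term collapses to $-g_{l+1}^{\mathrm{trans}}B$, reproducing the stated $\mathrm{cond}$ recursion. The base case is immediate: $q_{\theta,0} = q_0$ by definition, so the initial gradients are simply those of $\log q_0$.

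The genuinely delicate points here are bookkeeping rather than conceptual. I would be careful to keep the autograd row-vector convention consistent so that $\bigl(\frac{\partial T_{l,\theta_l}}{\partial x_l}\bigr)^{-1}$ always multiplies from the right, and to split the log-determinant gradient $s$ correctly into its $\mathrm{trans}$ and $\mathrm{cond}$ parts: the $\mathrm{cond}$ part is nonzero precisely because $A = \partial \tau / \partial x^{\mathrm{trans}}$ still depends on $x_l^{\mathrm{cond}}$ through the conditioner $h_\theta$. The only step that is an observation rather than mechanical substitution is recognizing the reuse of $g_{l+1}^{\mathrm{trans}}$ inside the $\mathrm{cond}$ equation, which both yields the compact recursive form and clarifies the $\mathcal{O}(d)$ cost: since $\tau$ acts elementwise, $A$ is diagonal, so $A^{-1}$ and $A^{-1}B$ never require a general matrix inversion.
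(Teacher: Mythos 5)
Your proof is correct and follows essentially the same route as the paper: both specialize the general gradient recursion to the block-triangular Jacobian of a coupling layer, identify $\det J = \det A$, obtain the blocks $A^{-1}$ and $-A^{-1}B$, and then recognize the reuse of the $\mathrm{trans}$-gradient inside the $\mathrm{cond}$-equation. The only cosmetic difference is that you get the off-diagonal block $-A^{-1}B$ by explicitly inverting the block-triangular forward Jacobian, whereas the paper derives the same quantity by differentiating the identity $0 = \partial x_{l+1}^{\mathrm{trans}}/\partial x_{l+1}^{\mathrm{cond}}$ --- these are the same computation.
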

For a proof, see \Cref{app: proof recursive}.
We stress that the Jacobian $\nicefrac{\partial f_\theta(x^{\mathrm{trans}}_{l}, x^{\mathrm{cond}}_{l})}{\partial x^{\mathrm{trans}}_{l}}$ is a $k\times k$ square and invertible matrix, since $f_\theta(\cdot, x^{\mathrm{cond}}_l)$ is bijective for any $x^{\mathrm{cond}}_l \in \R^{d-k}$, see \eqref{eq:coupling_trafos}.\vspace{0.2cm}\\
\textbf{Implicitly Invertible Flows.} An interesting property of the recursions in \Cref{eq:recursion_coupling} is that they only involve (derivatives of) $f_\theta(x^{\mathrm{trans}}_l, x^{\mathrm{cond}}_l)$ and can thus be evaluated during the sampling from the flow. As such, they are directly applicable to implicitly invertible flows. Further note that the Jacobian $\nicefrac{\partial f_\theta(x^{\mathrm{trans}}_{l}, x^{\mathrm{cond}}_{l})}{\partial x^{\mathrm{trans}}_{l}}$ can be inverted in linear time $\mathcal O(d)$, as it is a diagonal matrix; the function $f$ acts element-wise on $x^{\mathrm{trans}}_{l}$, see \eqref{eq:coupling_trafos}. Therefore, the recursion has the decisive advantage that no numerical inversions need to be performed. In particular, there is no need for prohibitive backpropagation through such an inversion. \vspace{0.2cm}\\
\textbf{Explicitly Invertible Flows.} For explicitly invertible normalizing flows ---  the most favorable setup for the baseline method from \citet{vaitl2022gradients} --- the runtime reduction appears to be more mild at first sight. The algorithm has roughly the cost of three forward passes: one each for the calculation of both $x$ and $G$ and one more for the backward pass when calculating the path gradient in \eqref{eq:vaitl3f}. This is to be compared to the cost of five forward passes for the baseline method by \citet{vaitl2022gradients} to calculate path gradients and two forward passes for the standard total gradient. However, this rough counting neglects the synergy between the sampling process $x=T(x_0)$ and the calculation of the score $G$. As we will show experimentally in \Cref{sec: numerics}, the actual runtime increase is only about forty percent compared to the standard total gradient.\\
Finally, let us note that for the aforementioned popular case of affine coupling flows our recursion from \Cref{eq:recursion_coupling} takes a particular form. Since fewer terms need to be calculated, the following recursion gives an additional improvement in computational speed.
\vspace{0.1cm}
\begin{restatable}[Recursive gradient computations for affine coupling flows]{corollary}{recursiveaffine}
    For an affine coupling flow \eqref{eq:affine_flow}, the recursion for the derivative of the logarithmic density can be simplified to
    \begin{align}
        \frac{\partial \log q_{\theta,l+1}(x_{l+1})}{\partial x_{l+1}^{\mathrm{trans}}} & =  \frac{\partial \log q_{\theta,l}(x_l)}{\partial x_l^{\mathrm{trans}}} \varoslash  \sigma_\theta(x_l^{\mathrm{cond}}), \\
        \begin{split}
            \frac{\partial \log q_{\theta,l+1}(x_{l+1})}{\partial x^{\mathrm{cond}}_{l+1}} = & \frac{\partial \log q_{\theta,l}(x_{l})}{\partial x^{\mathrm{cond}}_{l}}
            - \frac{\partial \log q_{\theta,l+1}(x_{l+1})}{\partial x^{\mathrm{trans}}_{l+1}}\left( \frac{\partial \sigma_\theta(x^{\mathrm{cond}}_l)}{\partial x^{\mathrm{cond}}_{l}} \odot \overline{x}_l^{\mathrm{trans}} + \frac{\partial \mu_\theta(x^{\mathrm{cond}}_l)}{\partial x^{\mathrm{cond}}_{l}} \right) \nonumber \\
                                                                                             & - \frac{\partial}{\partial x^{\mathrm{cond}}_{l}} \log \Big| \prod_{i=1}^k \sigma_{\theta, i}(x^{\mathrm{cond}}_l) \Big|  \, ,
        \end{split}
    \end{align}
    where $\overline{x}_l^{\mathrm{trans}}$ is a matrix with entries $\left(\overline{x}_l^{\mathrm{trans}}\right)_{ij} := x_{l,i}^{\mathrm{trans}}$ for $i \in \{1, \dots, k \}, j \in \{1, \dots, d-k \}$.
\end{restatable}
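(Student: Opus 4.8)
The plan is to derive the corollary purely by substituting the affine transformation $f_\theta(x^{\mathrm{trans}}_l, x^{\mathrm{cond}}_l) = \sigma_\theta(x^{\mathrm{cond}}_l) \odot x^{\mathrm{trans}}_l + \mu_\theta(x^{\mathrm{cond}}_l)$ into the general coupling-flow recursion of \Cref{prop: Recursive gradient for coupling flows}. No new analytic content is required: the whole argument reduces to evaluating the three Jacobian-type objects that appear in that recursion, namely $\partial f_\theta / \partial x^{\mathrm{trans}}_l$, the log-determinant $\log|\det \partial f_\theta / \partial x^{\mathrm{trans}}_l|$ together with its partial derivatives, and $\partial f_\theta / \partial x^{\mathrm{cond}}_l$, and then simplifying using the special structure of the affine map.

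First I would handle the $x^{\mathrm{trans}}$ recursion. Writing the $i$-th component $f_{\theta,i} = \sigma_{\theta,i}(x^{\mathrm{cond}}_l)\, x^{\mathrm{trans}}_{l,i} + \mu_{\theta,i}(x^{\mathrm{cond}}_l)$ and differentiating in $x^{\mathrm{trans}}_{l,j}$ gives $\partial f_{\theta,i}/\partial x^{\mathrm{trans}}_{l,j} = \sigma_{\theta,i}(x^{\mathrm{cond}}_l)\,\delta_{ij}$, so the Jacobian is the diagonal matrix $\mathrm{diag}(\sigma_\theta(x^{\mathrm{cond}}_l))$. Its inverse is the diagonal matrix of reciprocals, so the matrix product $(\partial \log q_{\theta,l}/\partial x^{\mathrm{trans}}_l)(\partial f_\theta/\partial x^{\mathrm{trans}}_l)^{-1}$ collapses to the element-wise division $(\partial \log q_{\theta,l}/\partial x^{\mathrm{trans}}_l) \varoslash \sigma_\theta(x^{\mathrm{cond}}_l)$. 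The crucial simplification is that $\det(\partial f_\theta/\partial x^{\mathrm{trans}}_l) = \prod_{i=1}^k \sigma_{\theta,i}(x^{\mathrm{cond}}_l)$ depends only on $x^{\mathrm{cond}}_l$ and not on $x^{\mathrm{trans}}_l$; hence $\partial_{x^{\mathrm{trans}}_l}\log|\det \partial f_\theta/\partial x^{\mathrm{trans}}_l| = 0$ and the entire second term of the general $x^{\mathrm{trans}}$ recursion drops out, leaving exactly the stated first equation.

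Next I would treat the $x^{\mathrm{cond}}$ recursion, which only requires evaluating the two terms that involve $x^{\mathrm{cond}}_l$ explicitly. Differentiating $f_{\theta,i}$ in $x^{\mathrm{cond}}_{l,j}$ by the product rule yields $\partial f_{\theta,i}/\partial x^{\mathrm{cond}}_{l,j} = (\partial \sigma_{\theta,i}/\partial x^{\mathrm{cond}}_{l,j})\, x^{\mathrm{trans}}_{l,i} + \partial \mu_{\theta,i}/\partial x^{\mathrm{cond}}_{l,j}$. Reading this as a $k \times (d-k)$ matrix and recognizing that multiplying each row $i$ of $\partial \sigma_\theta/\partial x^{\mathrm{cond}}_l$ by the scalar $x^{\mathrm{trans}}_{l,i}$ is precisely the Hadamard product with the broadcast matrix $\overline{x}^{\mathrm{trans}}_l$, I obtain $\partial f_\theta/\partial x^{\mathrm{cond}}_l = (\partial \sigma_\theta/\partial x^{\mathrm{cond}}_l)\odot \overline{x}^{\mathrm{trans}}_l + \partial \mu_\theta/\partial x^{\mathrm{cond}}_l$, which reproduces the middle term of the claimed recursion. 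The remaining log-determinant term becomes $\partial_{x^{\mathrm{cond}}_l}\log|\prod_{i=1}^k \sigma_{\theta,i}(x^{\mathrm{cond}}_l)|$ directly, completing the substitution.

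I expect the main obstacle to be purely bookkeeping rather than conceptual: one has to match the index conventions of \Cref{prop: Recursive gradient for coupling flows} (row-vector gradients, Jacobians acting on the right) and to verify that the broadcasting encoded by $\overline{x}^{\mathrm{trans}}_l$ with entries $(\overline{x}^{\mathrm{trans}}_l)_{ij} = x^{\mathrm{trans}}_{l,i}$ exactly captures the per-row scaling coming from the product rule. The only genuine observations are that the affine Jacobian in the transformed block is diagonal (so its inverse is an element-wise reciprocal and the matrix product reduces to $\varoslash$) and that the log-determinant is independent of $x^{\mathrm{trans}}_l$ (so it contributes nothing to the $x^{\mathrm{trans}}$ recursion); everything else is routine differentiation of the affine map.
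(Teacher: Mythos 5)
Your proposal is correct and follows essentially the same route as the paper's own proof: substitute the affine map into Proposition~\ref{prop: Recursive gradient for coupling flows}, observe that the Jacobian $\partial f_\theta/\partial x^{\mathrm{trans}}_l$ is the diagonal matrix $\operatorname{diag}(\sigma_\theta(x^{\mathrm{cond}}_l))$ so its inverse acts as element-wise division and the log-determinant $\log|\prod_i \sigma_{\theta,i}(x^{\mathrm{cond}}_l)|$ is independent of $x^{\mathrm{trans}}_l$, and evaluate $\partial f_\theta/\partial x^{\mathrm{cond}}_l$ by the product rule to obtain the broadcast form with $\overline{x}^{\mathrm{trans}}_l$. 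No gaps.
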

For a proof, see \Cref{app: proof recursive affine}.
Additionally, we show in \Cref{app:relation_cnf} that the fast path gradient derived by \citet{vaitl2022path} for continuous normalizing flows can be rederived using analogous steps as in Proposition~\ref{prop: Recursive gradient for coupling flows}. Our results therefore unify path gradient calculations of coupling flows with the analogous ones for continuous normalizing flows. \\
Finally, we note that a further distinctive strength of the proposed fast path gradient algorithm is that it can be performed at constant memory costs. Specifically, the calculation of $G$ can be done without saving any activations. Similarly, the activations needed for the vector Jacobian product \eqref{eq:vaitl3f} can be calculated alongside the backward pass as $T_\theta(x_0)=x$ is known using the techniques of \cite{gomez2017reversible}.

\section{Path gradients for the forward KL divergence}
\label{sec:Path-FW-KL}
For training normalizing flows with the forward KL divergence, previous works have mainly relied on reweighting path gradients \citep{vaitl2022gradients}. Specifically, their basic underlying trick is to rewrite the expectation value with respect to the ground-truth $p$ as an expectation value with respect to the model $q_\theta$
\begin{align}
    \KL(p , q_\theta) = \mathbb{E}_{x \sim p} \left[ \log \frac{p(x)}{q_\theta(x)}\right] = \mathbb{E}_{x \sim q_\theta} \left[ \frac{p(x)}{q_\theta(x)} \, \log \frac{p(x)}{q_\theta(x)}\right] \,.
\end{align}
For this reweighted loss, suitable path gradient estimators were then derived in \citet{tucker2018doubly}. Reweighting, however, has the significant downside that it leads to estimators with prohibitive variance --- especially for high-dimensional problems and in the early stages of training \citep{hartmann2021nonasymptotic}.
As a result, the proposed estimators cannot be applied in a scalable fashion \citep{dhaka2021challenges, geffner2020difficulty}.\\
In the following, we will propose a general method to apply path gradients to forward KL training without the need for reweighting. To this end, we first notice that the forward KL of densities in data space can be equivalently rewritten as a reverse KL in base space, namely
\begin{align}
    \KL(p, q_\theta) = \KL(p_{\theta,0}, q_0)\,,
\end{align}
where we have defined the pullback of the target density $p$ to base space as follows
\begin{align}
    p_{\theta, 0}(x_0) := p(T_\theta(x_0)) \left| \det \frac{\partial T_\theta(x_0)}{\partial x_0} \right| \,.
\end{align}
We refer to \citet{papamakarios2019normalizing} for a proof. As a result, all results derived for the reverse KL case in the last sections also apply verbatim to the forward KL case if one exchanges:
\begin{align}
    q_\theta \longleftrightarrow p_{\theta, 0} \,, &  & p \longleftrightarrow q_0 \,, &  & x_0 \longleftrightarrow x \,, &  & T_\theta(x_0) \longleftrightarrow T_\theta^{-1}(x) \,.
    \label{eq:Renaming-variables-FWRV}
\end{align}

In particular, the fast path gradient estimators can be straightforwardly applied. More precisely, the following statement holds:

\begin{restatable}[Path gradient for forward KL]{proposition}{pathgradientforward}
    \label{prop: Path gradient for forward KL}
    For the derivative of the forward KL divergence $\KL(p|q_\theta)$ w.r.t. the parameter $\theta$ it holds
    \begin{align}
        \label{eq: path gradient forward KL}
        \frac{\mathrm d}{\mathrm d \theta} \KL(p|q_\theta) & = \E_{x \sim p} \Big[\blacktriangledown_\theta \log \frac{p_{\theta,0}}{q_0}(T^{-1}_\theta(x)) \Big],
    \end{align}
    where $ p_{\theta,0}(x_0) := p( T_\theta(x_0)) \left| \det \frac{\partial T_\theta(x_0)}{\partial x_0} \right|$ is the pullback of the target density $p$ to base space.
\end{restatable}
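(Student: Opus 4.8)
The plan is to leverage the change of perspective already established in the excerpt, namely the identity $\KL(p, q_\theta) = \KL(p_{\theta,0}, q_0)$, which recasts the forward KL divergence in data space as a reverse KL divergence in base space with self-sampling distribution $p_{\theta,0}$ and fixed target $q_0$. Once in this form, the argument of \citet{roeder2017sticking} reproduced above applies essentially verbatim under the dictionary \eqref{eq:Renaming-variables-FWRV}, with $q_\theta \leftrightarrow p_{\theta,0}$, $p \leftrightarrow q_0$, and sampling map $T_\theta \leftrightarrow T_\theta^{-1}$. The only genuine content to verify is that the explicit (non-path) parameter derivative vanishes in expectation, which is the base-space analogue of the score identity $\E_{x_0 \sim q_0}[\partial_\theta \log q_\theta(T_\theta(x_0))] = 0$ used for the reverse KL.

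Concretely, I would first reparameterize the base-space expectation so that the sampling distribution no longer depends on $\theta$. Since $x \sim p$ implies $x_0 = T_\theta^{-1}(x) \sim p_{\theta,0}$ by the change-of-variables formula defining $p_{\theta,0}$, we may write
\begin{align}
\KL(p, q_\theta) = \KL(p_{\theta,0}, q_0) = \E_{x \sim p}\Big[\log \tfrac{p_{\theta,0}}{q_0}(T_\theta^{-1}(x))\Big].
\end{align}
Because $x \sim p$ carries no $\theta$-dependence, the derivative passes inside the expectation, and I would apply the total-derivative decomposition of Definition~\ref{def: path gradient} to $\varphi(\theta, x_0) := \log \tfrac{p_{\theta,0}}{q_0}(x_0)$ composed with the $\theta$-dependent evaluation point $x_0 = T_\theta^{-1}(x)$, obtaining a path gradient term plus the explicit derivative $\partial_\theta \varphi(\theta, x_0)\big|_{x_0 = T_\theta^{-1}(x)}$.

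Next I would show the expected explicit term vanishes. Since $q_0$ is $\theta$-independent, $\partial_\theta \varphi$ reduces to $\partial_\theta \log p_{\theta,0}(x_0)$, and undoing the reparameterization gives
\begin{align}
\E_{x \sim p}\Big[\partial_\theta \log p_{\theta,0}(x_0)\big|_{x_0 = T_\theta^{-1}(x)}\Big] &= \E_{x_0 \sim p_{\theta,0}}\big[\partial_\theta \log p_{\theta,0}(x_0)\big] \\
&= \int \partial_\theta\, p_{\theta,0}(x_0)\, \mathrm dx_0 = \partial_\theta \int p_{\theta,0}(x_0)\, \mathrm dx_0 = 0,
\end{align}
where the final step uses that $p_{\theta,0}$ is a normalized density for every $\theta$, its integral being $\int p(x)\,\mathrm dx = 1$ by change of variables. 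Combining this with the decomposition yields exactly \eqref{eq: path gradient forward KL}.

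The main obstacle I anticipate is not the algebra but the careful bookkeeping of which parts of the $\theta$-dependence are path versus explicit: unlike the reverse KL case, here the integrand $p_{\theta,0}$ carries $\theta$ both explicitly (through $T_\theta$ and its Jacobian inside the pullback density) and through the evaluation point $T_\theta^{-1}(x)$, so one must be scrupulous that $\blacktriangledown_\theta$ captures only the latter while the score identity annihilates the former. A secondary technical point is the interchange of differentiation and integration in the score identity, which I would justify under the standard regularity assumptions on the flow and target that are implicit throughout the paper.
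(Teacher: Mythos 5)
Your proposal is correct and follows essentially the same route as the paper's own proof: both recast $\KL(p,q_\theta)$ as the reverse divergence $\KL(p_{\theta,0},q_0)$, reparameterize the expectation over the $\theta$-independent $p$, split the total derivative into path and explicit parts, and kill the explicit part via the score identity $\E_{x_0\sim p_{\theta,0}}\left[\partial_\theta \log p_{\theta,0}(x_0)\right]=0$. Your added remark that $\int p_{\theta,0}(x_0)\,\mathrm{d}x_0 = \int p(x)\,\mathrm{d}x = 1$ by change of variables makes explicit a normalization step the paper leaves implicit, but it is not a substantive difference.
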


For a proof, see \Cref{app: proofs - forward KL path gradients}. Note that if $p$ is only known in unnormalized form, so is its pullback $p_{\theta, 0}$. However, this has no impact on the derived result as it only involves derivatives of the log density for which the normalization is irrelevant.
The following comments are in order:

\begin{itemize}
    \item The proposed path gradient for maximum likelihood training provides an attractive mechanism to incorporate the known closed-form target energy function into the training process. In particular, this can help to alleviate overfitting, cf. \Cref{fig:gmmfess,fig:gmmfess_2D-plot} --- a particularly relevant concern as the forward training often uses a low amount of samples which entails the risk of density collapse on the individual samples for standard maximum likelihood training. The information about the energy function helps the path-gradient-based training to avoid this undesired behaviour. On the other hand, forward KL path gradient training cannot be used if the target energy function is not known such as in image generation tasks.
    \item As for path gradients of the reverse KL, we expect lower variance of the Monte Carlo estimator of \eqref{eq: path gradient forward KL} compared to standard maximum likelihood gradient estimators. In particular, we note that at the optimum $q_\theta = p$ the variance of the gradient estimator vanishes.
    \item The proof in \Cref{app: proofs} shows that the so-called generalized doubly reparameterized gradient  proposed in \cite{bauer2021generalized} in the context of hierarchical VAEs can be brought in the the same form as the path gradient for the forward KL objective derived in this section.
          However, only our formulation elucidates the symmetry between the forward and reverse objective and therefore allows the application for fast path gradient estimators.
\end{itemize}

\section{Numerical Experiments}
\label{sec: numerics}

In this section, we compare our fast path gradients with the conventional approaches for several normalizing flow architectures, both using forward and reverse KL optimization. We consider target densities with applications in machine learning (Gaussian mixture model) as well as physics ($U(1)$ gauge theory, and $\phi^4$ lattice model). We refer to \Cref{app: computational details} for further details.\vspace{0.2cm}\\
\textbf{Gaussian Mixture Model.}
As a tractable multimodal example, we consider a Gaussian mixture model in $\mathbb{R}^d$ with $\sigma^2=0.5$, i.e. we choose the energy function
\begin{align}
    E(x) & = -\log \sum_{\mu \in \{-1, 1\}^d} \mathcal{N}(x ; \mu, \sigma^2 \ \text{I}_d)
\end{align}
Note that the number of modes of the corresponding target density increases exponentially in the dimensions, i.e. we have $2^d$ modes in total. We choose $d = 6$, resulting in $64$ modes. As shown in Figure~\ref{fig:gmmfess}, for most choices of hyperparameters, path gradient training outperforms the standard training objectives. In Figures \ref{fig:sweep_2linlayers} to \ref{fig:sweep_6linlayers} in the appendix we present further experiments, showing that path gradient estimators are indeed often better and never significantly worse than standard estimators. The slight overhead in runtime is therefore more than compensated by better training convergence.

The additional information about the ground-truth energy function included in the forward path gradient training
alleviates overfitting in forward KL training, see the discussion in Section~\ref{sec:Path-FW-KL}. \vspace{0.2cm}\\
\begin{figure}
    \centering
    \vspace{-1.5em}
    \includegraphics[width=0.9\textwidth]{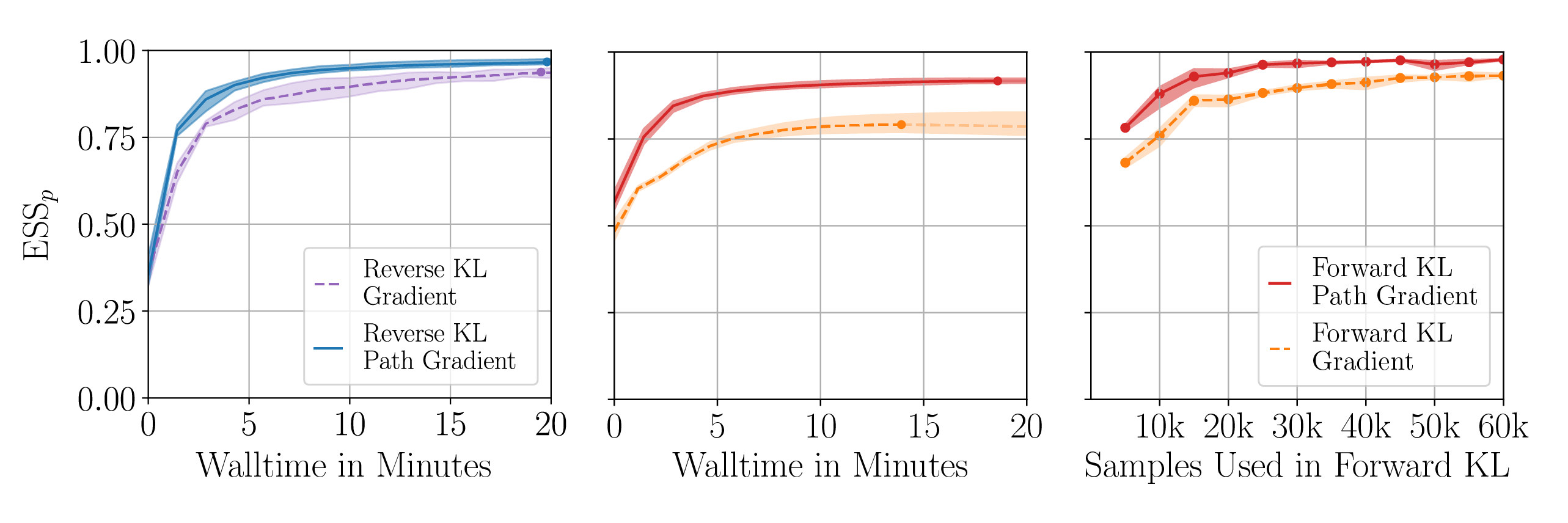}
    \vspace{-1.0em}
    \caption{Effective sample size (ESS) over the training iterations for a Gaussian mixture model using the forward and the reverse KL divergence. The intervals denote the standard error over $5$ runs.
        The best performance is indicated by a dot with subsequent faded average performance in the left and center figure.
        For the forward KL, we compare multiple hyperparameter settings (see \Cref{app: computational details}) and plot the respective best runs in the central plot. The right plot displays a stereotypical dependency on the data set size for fixed hyperparameters, see Tables \ref{tab:sweep_2linlayers}, \ref{tab:sweep_4linlayers} and \ref{tab:sweep_6linlayers} for more details. We can see that, typically, path gradients perform better than standard maximum likelihood gradients.}
    \label{fig:gmmfess}
\end{figure}
\begin{figure}
    \centering
    \vspace{-1em}
    \includegraphics[width=0.85\textwidth]{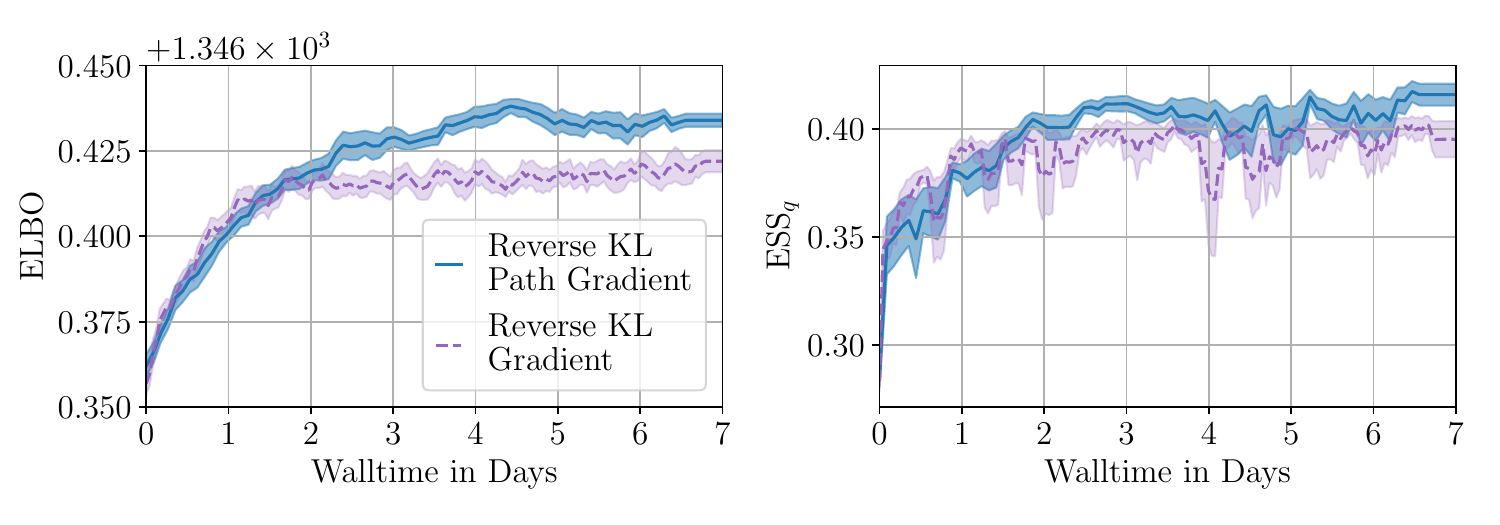}%
    \vspace{-1em}
    \caption{
        Training the $U(1)$ flow for Lattice Gauge Theory. Shaded area shows standard error over 4 runs.
        The Reverse KL Path Gradients reach higher performance and exhibit less erratic behavior.\vspace{-0.3cm}
    }
    \label{fig:u1-beta3}
\end{figure}
\textbf{$\smash{\phi^4}$ Field Theory}
can be described by a random vector $\phi \in \R^d$, whose entries $\phi_u$ represent the values of the corresponding field across a $16 \times 8$ lattice. The lattice positions are encoded in the set $\Lambda \subset \mathbb{N}^2$.
We assume periodic boundary conditions of the lattice.
The random vector $\phi$ admits the density\footnote{Note that, by slightly abusing notation, $\phi$ plays the role of what was $x$ before.} $p(\phi) = \frac{1}{\mathcal{Z}} \exp(-S(\phi))$ with action
\begin{align}
    \label{eq: phi-4 action}
    S(\phi) = \sum_{u,v \in \Lambda} \phi_u \triangle_{uv} \phi_v + \sum_{u \in \Lambda} \left( m^2 \phi_u^2 + \lambda \phi_u^4\right) \,,
\end{align}
where $\triangle_{uv}$ is the lattice Laplacian. The parameters $m$ and $\lambda$ are the bare mass and coupling, respectively.
We choose the value of these parameters such that they lie in the so-called critical region, as this is the most challenging regime. We refer to \cite{gattringer2009quantum} for more details on the underlying physics. Training is performed using both the forward and reverse KL objective with and without path gradients. For the flow, the same affine-coupling-based architecture as in \cite{nicoli2020asymptotically} is used. Samples for forward KL and ESS are generated using Hybrid Monte Carlo. We refer to \Cref{app: computational details} for more details. The path gradient training again outperforms the standard objective for both forward and reverse training, see Table~\ref{tab:all_results}. \vspace{0.2cm}\\
\textbf{Gauge Theory} was recently widely studied in the context of normalizing flows \citep{kanwar2020equivariant, albergo2021introduction, finkenrath2022tackling,  bacchio2023learning, cranmer2023advances} as it provides an ideal setting for illustrating the power of inductive biases. This is because the theory's action has a gauge symmetry, i.e., a symmetry which acts with independent group elements for each lattice site, see \cite{gattringer2009quantum} for more details.
Crucially, the field takes values in the circle group $U(1)$. Thus, flows on manifolds need to be considered. We use the flow architecture proposed by \citet{kanwar2020equivariant} which is only implicitly invertible. Sampling from the ground-truth distribution with Hybrid Monte Carlo is very challenging due to critical slowing down and we therefore refrain from forward KL training and forward ESS evaluation. Table~\ref{tab:all_results} and Figure~\ref{fig:u1-beta3} demonstrate that path gradients lead to overall better approximation quality. 
\vspace{0.2cm}\\
\textbf{Runtime Comparison.}
In Table~\ref{tab:runtimes-short}, we compare the runtime of our method to relevant baselines both for the ex- and implicitly invertible flows. To obtain a strong baseline for the latter, we use implicit differentiation as in \citet{kohler2021smooth} to avoid costly backpropagation through the numerical inversion. Our method is significantly faster than the baselines. We refer to \Cref{app: computational details} for a detailed analysis of how this runtime comparison scales with the chosen accuracy of the numerical inversion. Briefly summarized, we find that our method compares favorable to the baseline irrespective of the chosen accuracy.

\begin{table}[]
    \centering
    \vspace{-0.5em}
    \caption{Results of the experiments from \Cref{sec: numerics}. We measure the approximation quality of the variational density $q_\theta$ by the effective sampling size (ESS) plus standard deviations, where higher is better, i.e., 100$\%$ indicates perfect approximation, see Appendix~\ref{app: computational details} for details.}
    \label{tab:all_results}
    \begin{tabular}{cl||c|c||c|c}

                                  &                          & \multicolumn{2}{c||}{Reverse KL} & \multicolumn{2}{c}{Forward KL}                                        \\
                                  &                          & Gradient                         & Path Gradient                  & Gradient       & Path Gradient       \\
        \hline
        \multirow{2}{*}{GMM}      & ${\operatorname{ESS}}_p$ & $92.2 \pm 0.0$                   & \bf{97.4 $\pm$ 0.0}            & $79.1 \pm 0.0$ & \bf{91.8 $\pm$ 0.0} \\
                                  & ${\operatorname{ESS}}_q$ & $93.0 \pm 0.0$                   & \bf{97.4 $\pm$ 0.0}            & $84.1 \pm 0.0$ & \bf{91.8 $\pm$ 0.0} \\
        \hline \bf{}
        \multirow{2}{*}{$\phi^4$} & ${\operatorname{ESS}}_p$ & $85.6 \pm 0.1$                   & \bf{96.0 $\pm$ 0.1}            & $85.1 \pm 0.1$ & \bf{95.6 $\pm$ 0.0} \\
                                  & ${\operatorname{ESS}}_q$ & $85.6 \pm 0.1$                   & \bf{96.0 $\pm$ 0.1}            & $85.1 \pm 0.1$ & \bf{95.6 $\pm$ 0.0} \\
        \hline
        \multirow{2}{*}{$U(1)$}   & ${\operatorname{ESS}}_q$ & $40.1 \pm 0.0$                   & \bf{41.1 $\pm$ 0.0}            & ---            & ---                 \\
                                  & ELBO                     & $1346.42 \pm .01$                & \bf{1346.43 $\pm$ .00}         & ---            & ---                 \\
    \end{tabular}
    \vspace{-0.5em}
\end{table}

\begin{table}[]
    \vspace{-0.5em}
    \centering
    \begin{tabular}{l|l|rrrr}
                                                                          & Algorithm                                               & \multicolumn{3}{c}{Runtime factor with batch size}                                           \\
                                                                          &                                                         & 64                                                 & 1024               & 8192               \\
        \hline
        \parbox[t]{2mm}{\multirow{2}{*}{\rotatebox[origin=c]{90}{ Expl}}} & Alg. \ref{alg:fast-path-grad} \textbf{(ours)}           & \bf{ 1.6 $\pm$ 0.1}                                & \bf{1.4 $\pm$ 0.1} & \bf{1.4 $\pm$ 0.0} \\
                                                                          & Alg. \ref{alg:STL-path-grad} \citep{vaitl2022gradients} & 2.1 $\pm$ 0.1                                      & 2.2 $\pm$ 0.1      & 2.1 $\pm$ 0.0      \\
        \hline
        \parbox[t]{2mm}{\multirow{2}{*}{\rotatebox[origin=c]{90}{Impl}}}  & Alg. \ref{alg:fast-path-grad} \textbf{(ours)}           & \bf{2.2 $\pm$ 0.0}                                 & \bf{2.0 $\pm$ 0.1} & \bf{2.3 $\pm$ 0.0} \\
                                                                          & Alg. \ref{alg:STL-path-grad} + \citet{kohler2021smooth} & 17.5$\pm$  0.2                                     & 11.0$\pm$ 0.1      & 8.2 $\pm$ 0.0      \\
    \end{tabular}
    \vspace{-0.5em}
    \caption{Factor of runtime  increase (mean and standard deviation) in comparison to the standard gradient estimator, i.e., $\nicefrac{\textrm{runtime path gradient}}{\textrm{runtime standard gradient}}$ on an A$100$-$80$GB GPU. The upper set of experiments cover the explicitly invertible flows, applied to $\phi^4$ as treated in the experiments. The lower set covers implicitly invertible flows applied to $U(1)$ theory.}
    \label{tab:runtimes-short}
\end{table}

\section{Conclusion}
\label{sec: conclusion}
We have introduced a fast and unified method to estimate path gradients for normalizing flows which can be applied to both forward and reverse training. We find that the path gradient training consistently improves training for both the reverse and forward case. An appealing property of path-gradient maximum likelihood is that it can take information about the ground truth energy function into account and thereby acts as a particularly natural form of regularization. Our fast path gradient estimators are several multiples faster than the previous state-of-the-art, they are applicable accross a broad range of NF architectures, and considerably narrow the runtime gap to the standard gradient while preserving the desirable variance reduction.

\textbf{Acknowledgements.} L.V. thanks Matteo Gätzner for his preliminary work on GDReGs. L.W. thanks Jason Rinnert for visualization help and acknowledges support by the Federal Ministry of Education and Research (BMBF) for BIFOLD (01IS18037A). The research of L.R. has been partially funded by Deutsche Forschungsgemeinschaft (DFG) through the grant CRC $1114$ ``Scaling Cascades in Complex Systems'' (project A$05$, project number $235221301$). P.K. wants to thank Andreas Loukas for useful discussions.

\bibliography{bib}
\bibliographystyle{iclr2024_conference}

\newpage
\appendix

\section*{Appendix}

\section{Notation}
\label{app: notation}

For a function $\psi:\R^d \to \R$ we denote with $\frac{\mathrm d \psi}{\mathrm d x} $ its total derivative, for a function $\varphi:\R^d \times \R^p \to \R$ we denote with $\frac{\mathrm \partial \varphi(x, y)}{\mathrm \partial x} $ its partial derivative w.r.t. the first argument $x \in \R^d$ and for a function $T:\R^d \to \R^d$ we denote by $\frac{\partial T}{\partial x}$ its Jacobian. For ease of notation, we sometimes use the shorthand notation $\frac{\partial x_l}{\partial x_{l+1}}$ for $\frac{\partial }{\partial x_{l+1}} T_{l+1,\theta_{l+1}}^{-1}(x_{l+1})$, keeping in mind that $x_{l+1} = T_{l+1,{\theta_{l+1}}}(x_l)$. We write $\blacktriangledown \varphi$ for the path gradient of $\varphi$ as defined in \Cref{def: path gradient}. The symbol $\odot$ denotes elementwise multiplication and $\varoslash$ denotes elementwise division.

\section{Proofs}
\label{app: proofs}

In this section we collect the proofs of our propositions and corollaries, which we will recall here for convenience.

\subsection{Recursive gradient computations for coupling flows}
\label{app: proof recursive}

First, let us recall the following proposition from \Cref{sec: Fast Path Gradient Estimator}.
\recursivecomposition*

\begin{proof}
    The basic definition of flows consisting of compositions
    \begin{align}
        \log q_{\theta, l+1}(x_{l+1})= \log q_{\theta, l}(T_{l, \theta_l}^{-1}(x_{l+1})) + \log \left| \det  \frac{\partial T_{l, \theta_l}^{-1}(x_{l+1})}{\partial x_{l+1}}\right|
    \end{align}
    implies the following recursion
    \begin{align}
        \frac{\partial \log q_{\theta,l+1}(x_{l+1})}{\partial x_{l+1}} = \frac{\partial \log q_{\theta,l}(T_{l, \theta_l}^{-1}(x_{l+1}))}{\partial x_{l+1}} + \frac{\partial}{\partial x_{l+1}} \log \left| \det  \frac{\partial T_{l, \theta_l}^{-1}(x_{l+1})}{\partial x_{l+1}}\right| \,.
    \end{align}
    Since for general normalizing flows, the inverse $T_{l}^{-1}(x_{l+1})$ is not efficiently computable, we apply the chain rule and the inverse function theorem
    \begin{subequations}
        \begin{align}
            \frac{\partial \log q_{\theta,l+1}(x_{l+1})}{\partial x_{l+1}} & = \frac{\partial \log q_{\theta,l}(x_{l})}{\partial x_{l}} \frac{\partial T_{l, \theta_l}^{-1}(x_{l+1})}{\partial x_{l+1}} - \frac{\partial}{\partial x_{l}} \left( \log \left| \det  \frac{\partial T_{l, \theta_l}(x_{l})}{\partial x_{l}}\right|\right) \frac{\partial T_{l, \theta_l}^{-1}(x_{l+1})}{\partial x_{l + 1}}                                                           \\
                                                                           & = \frac{\partial \log q_{\theta,l}(x_{l})}{\partial x_{l}} \left(\frac{\partial T_{l,\theta_l}(x_{l })}{\partial x_{l}} \right)^{-1} - \frac{\partial}{\partial x_{l}} \left( \log \left| \det  \frac{\partial T_{l,\theta_l}(x_{l })}{\partial x_{l}}\right|\right) \left(\frac{\partial T_{l,\theta_l}(x_{l })}{\partial x_{l}} \right)^{-1} \,. \label{eq:starting_point_recursion}
        \end{align}
    \end{subequations}
\end{proof}

We can break down the recursive operation further in the next proposition.

\recursivecoupling*

\begin{proof}
    For coupling flows $x_l=(x^{\mathrm{trans}}_l, x^{\mathrm{cond}}_l)$ and $x^{\mathrm{cond}}_{l+1} = x^{\mathrm{cond}}_l$. This implies that the determinant of the Jacobian is
    \begin{align}
        \det \frac{\partial x_l}{\partial x_{l+1}} = \det \begin{pmatrix}
                                                              \frac{\partial x^{\mathrm{trans}}_l}{\partial x^{\mathrm{trans}}_{l+1}} & \frac{\partial x^{\mathrm{trans}}_l}{\partial x^{\mathrm{cond}}_{l+1}} \\
                                                              0                                                                       & \mathds{1}                                                             \\
                                                          \end{pmatrix} =  \det \frac{\partial x^{\mathrm{trans}}_l}{\partial x^{\mathrm{trans}}_{l+1}}\,.
    \end{align}
    Using this result and splitting into the transformed and conditional components, the recursion can be rewritten as follows
    \begin{align}
        \label{eq: recursion trans}
        \frac{\partial \log q_{\theta,l+1}(x_{l+1})}{\partial x^{\mathrm{trans}}_{l+1}} & = \frac{\partial \log q_{\theta,l}(x_{l})}{\partial x^{\mathrm{trans}}_{l+1}} + \frac{\partial}{\partial x^{\mathrm{trans}}_{l+1}} \log \left| \det  \frac{\partial x^{\mathrm{trans}}_l}{\partial x^{\mathrm{trans}}_{l+1}}\right| \,, \\
        \label{eq: recursion cond}
        \frac{\partial \log q_{\theta,l+1}(x_{l+1})}{\partial x^{\mathrm{cond}}_{l+1}}  & = \frac{\partial \log q_{\theta,l}(x_{l})}{\partial x^{\mathrm{cond}}_{l+1}} + \frac{\partial}{\partial x^{\mathrm{cond}}_{l+1}} \log \left| \det  \frac{\partial x^{\mathrm{trans}}_l}{\partial x^{\mathrm{trans}}_{l+1}}\right| \,,
    \end{align}
    We want to rewrite the right-hand side of this recursion in terms of quantities that involve derivatives with respect to $x_l$. To this end, let us study derivatives w.r.t. $x_{l+1}^\mathrm{trans}$ and $x_{l+1}^\mathrm{cond}$, respectively.

    For a generic function $\varphi : \R^k \times \R^{d-k} \to \R$ it holds via the chain rule that
    \begin{equation}
        \frac{\partial \varphi(x_l^\mathrm{trans}, x_l^\mathrm{cond})}{\partial x^{\mathrm{trans}}_{l+1}} =  \frac{\partial \varphi(x_l^\mathrm{trans}, x_l^\mathrm{cond})}{\partial x^{\mathrm{trans}}_{l}}\frac{\partial x^{\mathrm{trans}}_{l}}{\partial x^{\mathrm{trans}}_{l+1}}  +  \frac{\partial \varphi(x_l^\mathrm{trans}, x_l^\mathrm{cond}) }{\partial x^{\mathrm{cond}}_{l}}\frac{\partial x^{\mathrm{cond}}_{l}}{\partial x^{\mathrm{trans}}_{l+1}}.
    \end{equation}
    Noting that $x^{\mathrm{trans}}_l = f_\theta^{-1}(x^{\mathrm{trans}}_{l+1}, x^{\mathrm{cond}}_{l+1})$ and noting that $x_{l+1}^\mathrm{cond} = x_{l}^\mathrm{cond}$, we can compute
    \begin{subequations}
        \label{eq: derivative trans}
        \begin{align}
            \frac{\partial \varphi(x_l^\mathrm{trans}, x_l^\mathrm{cond})}{\partial x^{\mathrm{trans}}_{l+1}} & =  \frac{\partial \varphi(x_l^\mathrm{trans}, x_l^\mathrm{cond})}{\partial x^{\mathrm{trans}}_{l}}\frac{\partial f_\theta^{-1}(x^{\mathrm{trans}}_{l+1}, x^{\mathrm{cond}}_{l+1})}{\partial x^{\mathrm{trans}}_{l+1}}          \\
                                                                                                              & =  \frac{\partial \varphi(x_l^\mathrm{trans}, x_l^\mathrm{cond})}{\partial x^{\mathrm{trans}}_{l}} \left(\frac{\partial f_\theta(x^{\mathrm{trans}}_{l}, x^{\mathrm{cond}}_{l})}{\partial x^{\mathrm{trans}}_{l}}\right)^{-1},
        \end{align}
    \end{subequations}
    where we used the fact that the Jacobian of the inverse is the inverse of the Jacobian due to the inverse function theorem.

    Similarly, we can write
    \begin{subequations}
        \begin{align}
            \frac{\partial \varphi(x_l^\mathrm{trans}, x_l^\mathrm{cond})}{\partial x^{\mathrm{cond}}_{l+1}} & =   \frac{\partial \varphi(x_l^\mathrm{trans}, x_l^\mathrm{cond}) }{\partial x^{\mathrm{trans}}_{l}}\frac{\partial x^{\mathrm{trans}}_{l}}{\partial x^{\mathrm{cond}}_{l+1}} + \frac{\partial \varphi(x_l^\mathrm{trans}, x_l^\mathrm{cond})}{\partial x^{\mathrm{cond}}_{l}} \frac{\partial x^{\mathrm{cond}}_{l}}{\partial x^{\mathrm{cond}}_{l+1}} \\
                                                                                                             & =  \frac{\partial \varphi(x_l^\mathrm{trans}, x_l^\mathrm{cond}) }{\partial x^{\mathrm{trans}}_{l}} \frac{\partial x^{\mathrm{trans}}_{l}}{\partial x^{\mathrm{cond}}_{l+1}}  +  \frac{\partial \varphi(x_l^\mathrm{trans}, x_l^\mathrm{cond}) }{\partial x^{\mathrm{cond}}_{l}} \label{eq:app_cond_deriv} \,,
        \end{align}
    \end{subequations}
    where we have used the decomposition \eqref{eq:coupling_trafos} of a coupling flow.

    Note that the Jacobian $\frac{\partial x^{\mathrm{trans}}_{l}}{\partial x^{\mathrm{cond}}_{l+1}}$ is not necessarily invertible and may not even be a square matrix. We have assumed that $x^{\mathrm{trans}}_l = f_\theta^{-1}(x^{\mathrm{trans}}_{l+1}, x^{\mathrm{cond}}_{l+1})$ is invertible only with respect to its first argument (for any choice of its last).
    So while the Jacobians $\frac{\partial x^{\mathrm{trans}}_{l}}{\partial x^{\mathrm{trans}}_{l+1}}$ and $\frac{\partial f_\theta(x^{\mathrm{trans}}_{l}, x^{\mathrm{cond}}_l)}{\partial x^{\mathrm{trans}}_{l}}$ are square and invertible matrices, the same cannot be said for the Jacobian $\frac{\partial x^{\mathrm{trans}}_{l}}{\partial x^{\mathrm{cond}}_{l+1}}$. However, we can use the following trick

    \begin{subequations}
        \begin{align}
            0 = \frac{\partial x^{\mathrm{trans}}_{l+1}}{\partial x^{\mathrm{cond}}_{l+1}} & = \frac{\partial f_\theta(x^{\mathrm{trans}}_{l}, x^{\mathrm{cond}}_l)}{\partial x^{\mathrm{cond}}_{l+1}}                                                                                                                                                                                                                                                             \\
                                                                                           & =   \frac{\partial f_\theta(x^{\mathrm{trans}}_{l}, x^{\mathrm{cond}}_l)}{\partial x^{\mathrm{trans}}_{l} } \frac{\partial x^{\mathrm{trans}}_{l}}{\partial x^{\mathrm{cond}}_{l+1}}+  \frac{\partial f_\theta(x^{\mathrm{trans}}_{l}, x^{\mathrm{cond}}_l)}{\partial x^{\mathrm{cond}}_{l} } \frac{\partial x^{\mathrm{cond}}_{l}}{\partial x^{\mathrm{cond}}_{l+1}} \\
                                                                                           & = \frac{\partial f_\theta(x^{\mathrm{trans}}_{l}, x^{\mathrm{cond}}_l)}{\partial x^{\mathrm{trans}}_{l} } \frac{\partial x^{\mathrm{trans}}_{l}}{\partial x^{\mathrm{cond}}_{l+1}}  + \frac{\partial f_\theta(x^{\mathrm{trans}}_{l}, x^{\mathrm{cond}}_l)}{\partial x^{\mathrm{cond}}_{l} }.
        \end{align}
    \end{subequations}

    Since the Jacobian $\frac{\partial f_\theta(x^{\mathrm{trans}}_{l}, x^{\mathrm{\mathrm{cond}}}_l)}{\partial x^{\mathrm{trans}}_{l} }$ is invertible, the above statement is equivalent to
    \begin{align}
        \frac{\partial x^{\mathrm{trans}}_{l}}{\partial x^{\mathrm{cond}}_{l+1}} = - \left( \frac{\partial f_\theta(x^{\mathrm{trans}}_{l}, x^{\mathrm{cond}}_l)}{\partial x^{\mathrm{trans}}_{l} } \right)^{-1}\frac{\partial f_\theta(x^{\mathrm{trans}}_{l}, x^{\mathrm{cond}}_l)}{\partial x^{\mathrm{cond}}_{l} }.
    \end{align}
    Substituting this result into \eqref{eq:app_cond_deriv} yields

    \begin{align}
        \label{eq: derivative cond}
        \begin{split}
            \frac{\partial \varphi(x_l^\mathrm{trans}, x_l^\mathrm{cond})}{\partial x^{\mathrm{cond}}_{l+1}} & =  -\frac{\partial \varphi(x_l^\mathrm{trans}, x_l^\mathrm{cond}) }{\partial x^{\mathrm{trans}}_{l}} \left( \frac{\partial f_\theta(x^{\mathrm{trans}}_{l}, x^{\mathrm{cond}}_l)}{\partial x^{\mathrm{trans}}_{l} } \right)^{-1}\frac{\partial f_\theta(x^{\mathrm{trans}}_{l}, x^{\mathrm{cond}}_l)}{\partial x^{\mathrm{cond}}_{l} } \\
                                                                                                             & \qquad +  \frac{\partial \varphi(x_l^\mathrm{trans}, x_l^\mathrm{cond}) }{\partial x^{\mathrm{cond}}_{l}} \,.
        \end{split}
    \end{align}

    Next, we note that the determinant of the Jacobian can be rewritten as
    \begin{align}
        \label{eq: derivative log determinant}
        \log \left| \det  \frac{\partial f_\theta^{-1}(x^{\mathrm{trans}}_{l+1}, x^{\mathrm{cond}}_{l+1})}{\partial x^{\mathrm{trans}}_{l+1}} \right| = - \log \left| \det  \frac{\partial f_\theta(x^{\mathrm{trans}}_{l}, x^{\mathrm{cond}}_l)}{\partial x^{\mathrm{trans}}_{l}} \right|,
    \end{align}
    using again the inverse function theorem.

    Now, plugging \eqref{eq: derivative trans}, \eqref{eq: derivative cond} and \eqref{eq: derivative log determinant} into \eqref{eq: recursion trans} for suitable choices of $\varphi$, we can rewrite the recursion in the desired form:
    \begin{align}
        \begin{split}
            \frac{\partial \log q_\theta(x_{l+1})}{\partial x^{\mathrm{trans}}_{l+1}} = & \frac{\partial \log q_\theta(x_{l})}{\partial x^{\mathrm{trans}}_{l}} \left(\frac{\partial f_\theta(x^{\mathrm{trans}}_{l}, x^{\mathrm{cond}}_{l})}{\partial x^{\mathrm{trans}}_{l}}\right)^{-1} \\ &- \frac{\partial}{\partial x^{\mathrm{trans}}_{l}}
            \log \left| \det  \frac{\partial f_\theta(x^{\mathrm{trans}}_{l}, x^{\mathrm{cond}}_l)}{\partial x^{\mathrm{trans}}_{l}} \right| \left(\frac{\partial f_\theta(x^{\mathrm{trans}}_{l}, x^{\mathrm{cond}}_{l})}{\partial x^{\mathrm{trans}}_{l}}\right)^{-1}.
        \end{split}
    \end{align}
    This is precisely the form stated in the proposition. Analogously, plugging \eqref{eq: derivative trans}, \eqref{eq: derivative cond} and \eqref{eq: derivative log determinant} into \eqref{eq: recursion cond}, the conditional component can be rewritten as
    \begin{align}
        \begin{split}
            \frac{\partial \log q_\theta(x_{l+1})}{\partial x^{\mathrm{cond}}_{l+1}} = & \frac{\partial \log q_\theta(x_{l})}{\partial x^{\mathrm{cond}}_{l}}
            - \frac{\partial \log q_\theta(x_{l})}{\partial x^{\mathrm{trans}}_{l}} \left( \frac{\partial f_\theta(x^{\mathrm{trans}}_{l}, x^{\mathrm{cond}}_l)}{\partial x^{\mathrm{trans}}_{l} } \right)^{-1} \frac{\partial f_\theta(x^{\mathrm{trans}}_{l}, x^{\mathrm{cond}}_l)}{\partial x^{\mathrm{cond}}_{l}}                                                                                                                                                                                          \\
                                                                                       & + \frac{\partial }{\partial x^{\mathrm{trans}}_{l}} \log \left| \det  \frac{\partial f_\theta(x^{\mathrm{trans}}_{l}, x^{\mathrm{cond}}_l)}{\partial x^{\mathrm{trans}}_{l}} \right|\left( \frac{\partial f_\theta(x^{\mathrm{trans}}_{l}, x^{\mathrm{cond}}_l)}{\partial x^{\mathrm{trans}}_{l} } \right)^{-1} \frac{\partial f_\theta(x^{\mathrm{trans}}_{l}, x^{\mathrm{cond}}_l)}{\partial x^{\mathrm{cond}}_{l}} \\
                                                                                       & - \frac{\partial}{\partial x^{\mathrm{cond}}_{l}} \log \left| \det  \frac{\partial f_\theta(x^{\mathrm{trans}}_{l}, x^{\mathrm{cond}}_l)}{\partial x^{\mathrm{trans}}_{l}} \right|.
        \end{split}
    \end{align}

    This can be brought in more compact form by noticing that the term $ \frac{\partial \log q_\theta(x_{l+1})}{\partial x^{\mathrm{trans}}_{l+1}}$ appears in the expression for the conditional component. Using this, we can rewrite the conditional component as
    \begin{align}
        \begin{split}
            \frac{\partial \log q_\theta(x_{l+1})}{\partial x^{\mathrm{cond}}_{l+1}} = & \frac{\partial \log q_\theta(x_{l})}{\partial x^{\mathrm{cond}}_{l}}
            - \frac{\partial \log q_\theta(x_{l+1})}{\partial x^{\mathrm{trans}}_{l+1}}\frac{\partial f_\theta(x^{\mathrm{trans}}_{l}, x^{\mathrm{cond}}_l)}{\partial x^{\mathrm{cond}}_{l}}                                                                                    \\
                                                                                       & - \frac{\partial}{\partial x^{\mathrm{cond}}_{l}} \log \left| \det  \frac{\partial f_\theta(x^{\mathrm{trans}}_{l}, x^{\mathrm{cond}}_l)}{\partial x^{\mathrm{trans}}_{l}} \right| \,,
        \end{split}
    \end{align}
    which is precisely the form stated in the proposition.
\end{proof}

\subsection{Recursive gradient computations for affine coupling flows}
\label{app: proof recursive affine}

Let us recall the following corollary from \Cref{sec: Fast Path Gradient Estimator}.

\recursiveaffine*

\begin{proof}
    Affine coupling flows are defined by
    \begin{align}
        x^{\mathrm{trans}}_{l+1} = f_\theta(x^{\mathrm{trans}}_l, x^{\mathrm{cond}}_l) = \sigma_\theta(x^{\mathrm{cond}}_l) \odot x^{\mathrm{trans}}_l + \mu_\theta(x^{\mathrm{cond}}_l), &  & x^{\mathrm{cond}}_{l+1} = x^{\mathrm{cond}}_l \,.
    \end{align}
    This implies for the Jacobian
    \begin{align}
        \frac{\partial f_\theta(x^{\mathrm{trans}}_{l}, x^{\mathrm{cond}}_{l})}{\partial x^{\mathrm{trans}}_{l}} = \operatorname{diag}\left(\sigma_\theta(x^{\mathrm{cond}}_l)\right) ,
    \end{align}
    and for its determinant it holds
    \begin{align}
        \det \frac{\partial f_\theta(x^{\mathrm{trans}}_l, x^{\mathrm{cond}}_l) }{\partial x^{\mathrm{trans}}_l} = \prod_{i=1}^k \sigma_{\theta,i}(x^{\mathrm{cond}}_l) \,,
    \end{align}
    which notably does not depend on $x^{\mathrm{trans}}_l$. Therefore, the recursion for the transformed component simplifies to
    \begin{subequations}
        \begin{align}
            \frac{\partial \log q_{\theta, l+1}(x_{l+1})}{\partial x^{\mathrm{trans}}_{l+1}} = & \frac{\partial \log q_{\theta,l}(x_{l})}{\partial x^{\mathrm{trans}}_{l}}\left(\frac{\partial f_\theta(x^{\mathrm{trans}}_{l}, x^{\mathrm{cond}}_{l})}{\partial x^{\mathrm{trans}}_{l}}\right)^{-1} \nonumber \\ &-  \frac{\partial}{\partial x^{\mathrm{trans}}_{l}} \left(\frac{\partial f_\theta(x^{\mathrm{trans}}_{l}, x^{\mathrm{cond}}_{l})}{\partial x^{\mathrm{trans}}_{l}}\right)^{-1}
            \log \left| \det  \frac{\partial f_\theta(x^{\mathrm{trans}}_{l}, x^{\mathrm{cond}}_l)}{\partial x^{\mathrm{trans}}_{l}} \right|                                                                                                                                                                   \\
            =                                                                                  & \frac{\partial \log q_{\theta,l}(x_l)}{\partial x_l^{\mathrm{trans}}} \varoslash  \sigma_\theta(x_l^{\mathrm{cond}}) \,,\label{app:trans_affine}
        \end{align}
    \end{subequations}
    which is precisely the form stated in the proposition.
    The recursion for the conditional component simplifies to
    \begin{subequations}
        \begin{align}
            \frac{\partial \log q_{\theta,l+1}(x_{l+1})}{\partial x^{\mathrm{cond}}_{l+1}} = & \frac{\partial \log q_{\theta,l}(x_{l})}{\partial x^{\mathrm{cond}}_{l}}
            -\frac{\partial \log q_{\theta,l+1}(x_{l+1})}{\partial x^{\mathrm{trans}}_{l+1}} \frac{\partial f_\theta(x^{\mathrm{trans}}_{l}, x^{\mathrm{cond}}_l)}{\partial x^{\mathrm{cond}}_{l}}                                                                                                                               \\
                                                                                             & - \frac{\partial}{\partial x^{\mathrm{cond}}_{l}} \log \left| \det  \frac{\partial f_\theta(x^{\mathrm{trans}}_{l}, x^{\mathrm{cond}}_l)}{\partial x^{\mathrm{trans}}_{l}} \right| \nonumber                                      \\
            =                                                                                & \frac{\partial \log q_{\theta,l}(x_{l})}{\partial x^{\mathrm{cond}}_{l}}
            - \frac{\partial \log q_{\theta,l+1}(x_{l+1})}{\partial x^{\mathrm{trans}}_{l+1}}\left( \frac{\partial \sigma_\theta(x^{\mathrm{cond}}_l)}{\partial x^{\mathrm{cond}}_{l}} \odot \overline{x}_l^{\mathrm{trans}} + \frac{\partial \mu_\theta(x^{\mathrm{cond}}_l)}{\partial x^{\mathrm{cond}}_{l}} \right) \nonumber \\
                                                                                             & - \frac{\partial}{\partial x^{\mathrm{cond}}_{l}} \log \left| \prod_{i=1}^k \sigma_{\theta, i}(x^{\mathrm{cond}}_l) \right| \label{eq:temp1} \, ,
        \end{align}
    \end{subequations}
    where $\overline{x}_l^{\mathrm{trans}}$ is a matrix with entries $\left(\overline{x}_l^{\mathrm{trans}}\right)_{ij} := x_{l,i}^{\mathrm{trans}}$ for $i \in \{1, \dots, k \}, j \in \{1, \dots, d-k \}$. This shows the claim.
\end{proof}

\subsection{Path gradients for the forward KL divergence}
\label{app: proofs - forward KL path gradients}

In this section, we prove \Cref{prop: Path gradient for forward KL} and lay out its implications for path gradient estimators, both for the forward and reverse KL divergence.
In particular, we will discuss the favorable variance properties of path gradients in the case of the forward KL divergence, which we also verify experimentally in \Cref{fig:GradNorm-phi4}.

\pathgradientforward*

\begin{figure}
    \centering
    \includegraphics[width=0.6\textwidth]{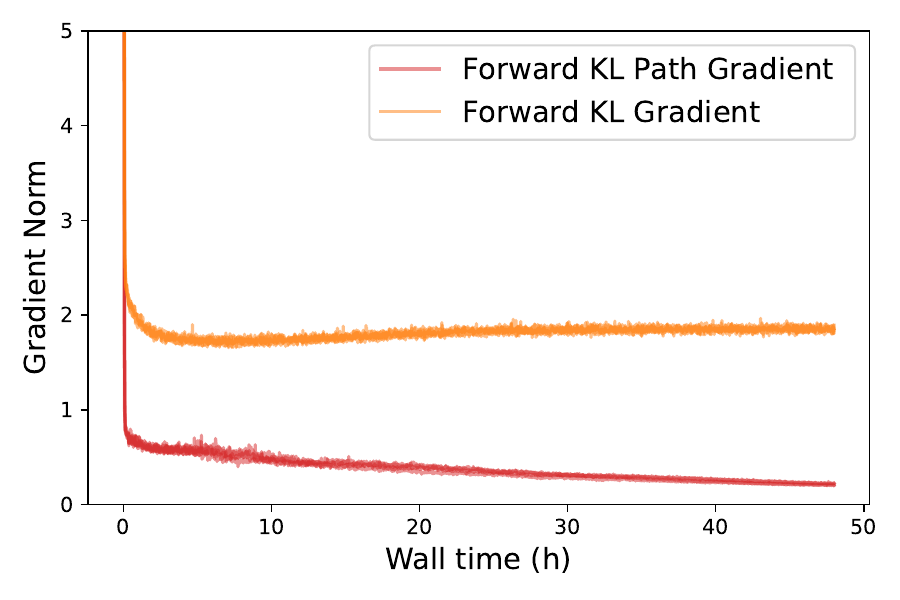}
    \caption{Gradient norm during training of the $\phi^4$-experiments. The norm of the path gradient estimator is closer to zero than the norm of the standard gradient estimator when the target density is well approximated, indicating lower variance.}
    \label{fig:GradNorm-phi4}
\end{figure}

\begin{proof}
    Let us first note that
    \begin{subequations}
        \label{eq: forward to reverse KL}
        \begin{align}
            \KL(p|q_\theta) & = \E_{x \sim p} \left[\log p(x) - \log \left|\det \frac{\partial T^{-1}_\theta(x)}{\partial x} \right| -  \log q_0(T^{-1}_\theta(x)) \right]              \\
                            & = \E_{x_0 \sim p_{\theta,0}} \left[\log p( T_\theta(x_0)) + \log \left| \det \frac{\partial T_\theta(x_0)}{\partial x_0} \right| -  \log q_0(x_0) \right] \\
                            & = \KL(p_{\theta,0}| q_0).
        \end{align}
    \end{subequations}
    Note that in \eqref{eq: forward to reverse KL} we have essentially transformed a forward into a reverse KL divergence.
    By looking at the problem of minimizing the forward KL divergence $\KL(p|q_\theta)$  --- from target density $p$ to the variational density $q_\theta$ --- as a reverse KL divergence $\KL(p_{\theta,0}| q_0)$ --- from a variational density $p_{\theta,0}$ to target density $q_0$ ---, we can employ the tools that exist for optimizing the reverse KL divergence.
    In particular, we can now apply the standard path gradients as derived by \citet{roeder2017sticking} and \citet{tucker2018doubly}. We compute

    \begin{subequations}
        \label{eq: STL for forward KL}
        \begin{align}
            \frac{\mathrm d}{\mathrm d \theta } \KL(p_{\theta,0} | q_0) & = \E_{x \sim p} \left[\frac{\partial }{\partial \theta} \log \frac{p_{\theta,0}}{q_0}(T^{-1}_\theta(x)) \right]                                                                                                                                        \\
                                                                        & = \E_{x \sim p} \left[\frac{\partial }{\partial x_0} \left(\log \frac{p_{\theta,0}}{q_0}(x_0)\right) \frac{\partial T_\theta^{-1}(x)}{\partial \theta} + \frac{\partial \log p_{\theta,0}(x_0)}{\partial \theta}\Big|_{x_0 = T^{-1}_\theta(x)} \right] \\
                                                                        & = \E_{x \sim p} \left[\blacktriangledown_\theta \log \frac{p_{\theta,0}}{q_0}(T^{-1}_\theta(x)) \right]+ \E_{x_0 \sim p_{\theta,0}} \left[\frac{\partial \log p_{\theta,0}(x_0)}{\partial \theta}\right]                                               \\
                                                                        & = \E_{x \sim p} \left[\blacktriangledown_\theta \log \frac{p_{\theta,0}}{q_0}(T^{-1}_\theta(x)) \right],
        \end{align}
    \end{subequations}

    where we used the fact
    that the score $\frac{\partial }{\partial \theta} \log p_{\theta,0}(x_0)$ vanishes in expectation over $p_{\theta,0}$, since
    \begin{subequations}
        \begin{align}
            \E_{x_0 \sim p_{\theta,0}} \left[ \frac{\partial }{\partial \theta} \log p_{\theta,0}(x_0)\right] & = \int_{\R^d}  \frac{\partial }{\partial \theta} \log p_{\theta,0}(x_0) p_{\theta,0}(x_0) \mathrm d x_0 \\
                                                                                                              & = \int_{\R^d} \frac{\partial }{\partial \theta}  p_{\theta,0}(x_0) \mathrm d x_0                        \\
                                                                                                              & = \frac{\partial }{\partial \theta} \int_{\R^d} p_{\theta,0}(x_0)\mathrm d x_0                          \\
                                                                                                              & = \frac{\partial }{\partial \theta} 1  = 0.
        \end{align}
    \end{subequations}

    Now, the statement follows by combining \eqref{eq: forward to reverse KL} and \eqref{eq: STL for forward KL}.
\end{proof}

\subsubsection{Variance: Sticking the landing property}
The covariance matrix of the score term is known to be the Fisher Information (cf., e.g., \citet{vaitl2022gradients}). In this case, it is the Fisher Information $\mathcal I_0(\theta)$ of the pullback density $p_{\theta,0}$, defined as
\begin{align}
    \mathcal I_0(\theta) := \E_{x_0 \sim p_{\theta, 0}}\left[\frac{\partial }{\partial \theta} \log p_{\theta,0}(x_0)^\top   \frac{\partial }{\partial \theta} \log p_{\theta,0}(x_0)\right].
\end{align}
If the model perfectly approximates the target density, i.e. $p_{\theta, 0}(x_0) =  q_0(x_0)$ for all $ x_0 \in \R^{d}$, then the path gradient term
$$\blacktriangledown_\theta \left( \log \frac{p_{\theta, 0}}{ q_0}(T^{-1}_\theta(x))\right) =
    \underbrace{\frac{\partial }{\partial x_0} \left( \log p_{\theta, 0}(x_0) - \log  q_0(x_0)\right)}_{=0} \Bigg|_{x_0 = T_\theta^{-1}(x)} \frac{\partial T^{-1}_\theta(x)}{\partial \theta} \equiv 0$$
is zero almost surely. This implies that the path gradient estimator has zero variance in the limit of perfect approximation (which is sometimes called \textit{sticking the landing}), while the standard gradient estimator has non-vanishing covariance $\mathcal I_0(\theta)/N$, where $N$ is the sample size of the Monte Carlo estimator. Note that these results are exactly analogous to the reverse KL path gradients, noting the relations in \eqref{eq:Renaming-variables-FWRV}. Experimentally, we find for the forward KL divergence that the gradient norm indeed behaves as in previous works \citep{roeder2017sticking, vaitl2022gradients}, i.e. it exhibits the vanishing gradient properties. This is illustrated in \Cref{fig:GradNorm-phi4}.

\subsubsection{Motivation for Regularization}
The favorable behavior of path gradients for the forward KL divergence, which we have defined in \Cref{prop: Path gradient for forward KL}, can potentially be understood by identifying the additional terms appearing in the gradient as  having a regularizing effect. To this end, let us compare the standard maximum likelihood gradients with the path gradients. The former is given by
\begin{subequations}
    \begin{align}
        \frac{\mathrm d}{\mathrm d \theta} \KL(p| q_\theta) & =  -\E_{x\sim p}\left[\frac{\mathrm d}{\mathrm d \theta} \log q_{\theta}(x)\right]                                                                                                     \\
                                                            & = - \E_{x\sim p}\left[\frac{\mathrm d}{\mathrm d \theta} \left(  \log q_{0}(T^{-1}_\theta (x)) + \log \left| \det \frac{\partial T_\theta^{-1}(x)}{\partial x} \right|\right) \right],
    \end{align}
\end{subequations}
whereas the latter can be computed as
\begin{subequations}
    \label{eq: path gradient forward KL appendix}
    \begin{align}
        \frac{\mathrm d}{\mathrm d \theta} \KL(p| q_\theta) & = \E_{x \sim p}\left[ \blacktriangledown_\theta \left( \log \frac{q_{0}}{ p_{\theta, 0}}(T^{-1}_\theta(x))\right)\right]                                             \\
                                                            & = \E_{x \sim p}\Bigg[\frac{\partial }{\partial x_{0}} \Big( \log q_{0}(x_{0})) - \log \left| \det \frac{\partial T_\theta(x_{0})}{\partial x_{0}} \right|  \nonumber \\
                                                            & \hspace{4em} - \log p(T_\theta(x_{0})\Big)\Bigg|_{x_0 = T^{-1}(x)} \frac{\partial T^{-1}_\theta(x)}{\partial \theta}\Bigg].
    \end{align}
\end{subequations}
Note that only in the path gradient version \eqref{eq: path gradient forward KL appendix}, the target density $p$ appears and we conjecture that incorporating this information helps to not overfit to the given data sample. Crucially, in many applications, $p$ is (up to normalization) given explicitly such that \eqref{eq: path gradient forward KL appendix} can indeed be readily computed. \\
Because of the duality of the KL divergence, the regularization property also appears for the reverse KL, where the term involving $q_0$ does not appear in the standard gradient estimator, whereas for the path gradients it does.

\subsubsection{Relation to GDReG}

The above gradient formula \eqref{eq: path gradient forward KL} can be seen as a special case of the Generalized Doubly-Reparameterized Gradient Estimator (GDReG) derived in \cite{bauer2021generalized}, by noting that
\begin{equation}
    \E_{x \sim p} \left[\blacktriangledown_\theta \log \frac{p_{\theta,0}}{q_0}(T^{-1}_\theta(x)) \right] = -\E_{x \sim p}\left[\blacktriangledown_\theta \log\frac{p}{q_\theta}( T_\theta(x_0)) \Big|_{x_0=T_\theta^{-1}(x)} \right].
\end{equation}
This can be seen as follows:
\begin{subequations}
    \label{eq: eq: forward to reverse KL gradients}
    \begin{align}
        \blacktriangledown_\theta \log\frac{p}{q_\theta} & ( T_\theta(z)) \Big|_{x_0=T_\theta^{-1}(x)}                                                                                                                                                                                                                      \\
                                                         & =  \frac{\partial \left( \log p(x) - \log \det \left| \frac{\partial T^{-1}_\theta(x)}{\partial x} \right|- \log q_0(T^{-1}_\theta(x))  \right)}{\partial x}  \frac{\partial  T_\theta(x_0)}{\partial \theta} \Bigg|_{x_0 = T^{-1}_\theta(x)}                    \\
        \label{eq: derivative identity}
                                                         & = - \frac{\partial \left( \log p(x) - \log \det \left| \frac{\partial T^{-1}_\theta(x)}{\partial x} \right|- \log q_0(T^{-1}_\theta(x))  \right)}{ \partial x}  \frac{\partial T_\theta( x_0)}{\partial x_0 } \frac{\partial T_\theta^{-1}(x)}{\partial \theta } \\
                                                         & = -\frac{\partial \left( \log p(T_\theta( x_0)) + \log \det \left| \frac{\partial T_\theta(x_0)}{\partial x_0} \right|- \log q_0(x_0)  \right)}{\partial x_0}   \frac{\partial T_\theta^{-1}(x)}{\partial \theta }                                               \\
                                                         & =-\blacktriangledown_\theta \log \frac{p_{\theta,0}}{q_0}(T^{-1}_\theta(x)),
    \end{align}
\end{subequations}

where in \eqref{eq: derivative identity} we have used the identity
\begin{align}
    0 & = \frac{\partial T_\theta( T_\theta^{-1}(x))}{\partial \theta } = \frac{\partial T_\theta(z)}{\partial z } \frac{\partial T_\theta^{-1}(x)}{\partial \theta } + \frac{\partial T_\theta(z)}{\partial \theta}\Big|_{x_0 = T^{-1}_\theta(x)}.
\end{align}
However, only our derivation allows to interpret the estimator as an instance of the standard sticking-the-landing trick of eliminating a score function, which then motivates variance reduced estimators and allows us to use the proposed Algorithm \ref{alg:fast-path-grad}. Further, the time for computing \eqref{eq: path gradient forward KL} is expected to be significantly lower than for the GDReG estimator since in this form we can employ our proposed algorithm for efficiently computing the path gradient.

\section{Relation to Fast Path Gradient for Continuous Normalizing Flows}
\label{app:relation_cnf}

In this section, we demonstrate that the recently proposed fast path gradient for continuous normalizing flows (CNFs) proposed by \citet{vaitl2022path} can be obtained using completely analogous reasoning as in our derivation for the fast path gradient of the coupling flows.

We first note that the algorithm described in Section~\ref{sec: Fast Path Gradient Estimator} can be applied verbatim to the CNF case with
\begin{align}
    X_t = \mathcal{T}_\theta(X_0) := X_0 + \int_0^t v_\theta(X_s, s) \, \mathrm{d}s \,,
\end{align}
where $v_\theta: \mathbb{R}^d \times \mathbb{R} \to \mathbb{R}^d$ is the generating vector field of the CNF. Note that $T_\theta$ defined in \eqref{eq: composed flow} can be interpreted as a discretization of the flow $\mathcal{T}_\theta$ and $x$ in \eqref{eq: composed flow} can be seen as a discrete approximation of $X_t$. The only difference between the CNF and coupling case arises in the recursive relation to obtain the derivative
\begin{align}
    \frac{\partial \log q_\theta(X_t)}{\partial X_t} \,.
\end{align}
\citet{vaitl2022path} proposed an ODE that can be evolved along with the sampling process. As we will demonstrate subsequently, this ODE can easily be recovered by using the same reasoning as we applied for the coupling flows.

We start from the observation that the ODE can be discretized as
\begin{equation}
    x_{l+1} = x_{l} + v_\theta(x_l, l\Delta t) \Delta t \,. \label{eq:ode_disc}
\end{equation}
for $l \in \{0, \dots, \frac{t}{\Delta t} - 1\}$ in the sense that $X_{l \Delta t} = x_l + \mathcal{O}(\Delta t^2)$ . Here, we assume for convenience that the time increment $\Delta t > 0$ is chosen such that $ \frac{t}{\Delta t}$ is an integer.

As for the coupling flows, we start from equation~\eqref{eq:starting_point_recursion}, which for a discretized CNF is given by
\begin{align}
    \frac{\partial \log q_\theta(x_{l+1})}{\partial x_{l+1}} = \frac{\partial \log q_\theta(x_{l})}{\partial x_{l+1}} + \frac{\partial}{\partial x_{l+1}} \log \left| \det \frac{\partial x_l}{\partial x_{l+1}} \right| \,.
\end{align}
Using the chain rule, this can be rewritten as
\begin{subequations}
    \begin{align}
        \frac{\partial \log q_\theta(x_{l+1})}{\partial x_{l+1}} & = \frac{\partial \log q_\theta(x_{l})}{\partial x_{l}} \frac{\partial x_l}{\partial x_{l+1}} + \left(\frac{\partial}{\partial x_{l}} \log \left| \det \frac{\partial x_l}{\partial x_{l+1}} \right| \right) \frac{\partial x_l}{\partial x_{l+1}}                                                \\
                                                                 & = \frac{\partial \log q_\theta(x_{l})}{\partial x_{l}} \left(\frac{\partial x_{l+1}}{\partial x_{l}}\right)^{-1} + \left(\frac{\partial}{\partial x_{l}} \log \left| \det \frac{\partial x_l}{\partial x_{l+1}} \right| \right)  \left(\frac{\partial x_{l+1}}{\partial x_{l}}\right)^{-1} \,.   \\
                                                                 & = \frac{\partial \log q_\theta(x_{l})}{\partial x_{l}} \left(\frac{\partial x_{l+1}}{\partial x_{l}}\right)^{-1} - \left(\frac{\partial}{\partial x_{l}} \log \left| \det \frac{\partial x_{l+1}}{\partial x_{l}} \right| \right)  \left(\frac{\partial x_{l+1}}{\partial x_{l}}\right)^{-1} \,.
    \end{align}
\end{subequations}
From the discretized ODE \eqref{eq:ode_disc}, it follows that
\begin{align}
    \left(\frac{\partial x_{l+1}}{\partial x_{l}}\right)^{-1} = \left(\mathds{1} + \Delta t \, \frac{\partial v_\theta}{\partial x_l}(x_l, l\Delta t) \right)^{-1} = \mathds{1} - \Delta t \frac{\partial v_\theta}{\partial x_l}(x_l, l \Delta t) \,.
\end{align}
Using this expression, we obtain
\begin{align}
    \frac{\partial \log q_\theta(x_{l+1})}{\partial x_{l+1}} = \left( \frac{\partial \log q_\theta(x_{l})}{\partial x_{l}}  - \frac{\partial}{\partial x_{l}} \log \left| \det \frac{\partial x_{l+1}}{\partial x_{l}} \right| \right) \left(\mathds{1} - \Delta t \frac{\partial v_\theta}{\partial x_l}(x_l, l\Delta t)  \right)   \,.
\end{align}
Using the standard series expansion of the matrix-valued logarithm, we can check that
\begin{align}
    \log \left| \det  \left( \mathds{1} + \Delta t \frac{\partial v_\theta}{\partial x_l}(x_l, l\Delta t) \right) \right| = \sum_{n=1}^\infty \frac{(-1)^{k+1}}{k} \mathrm{Tr} \left(\frac{\partial v_\theta}{\partial x_l}\right)^k \, \Delta t^k = \Delta t \, \mathrm{Tr} \left[ \frac{\partial v_\theta}{\partial x_l} (x_l,l \Delta t) \right] + \mathcal{O}(\Delta t^2) \,,
\end{align}
where in the second equality, we have used that $\log \det A = \mathrm{Tr} \log A$ and $\log(1 + B) = \sum_{k=1}^\infty \frac{(-1)^{k+1}}{k} B^k$ for matrices $A$ and $B$.
Substituting this expression, in the expression above, we obtain
\begin{align*}
    \frac{1}{\Delta t} \left( \frac{\partial \log q_\theta(x_{l+1})}{\partial x_{l+1}} -  \frac{\partial \log q_\theta(x_{l})}{\partial x_{l}} \right) = - \frac{\partial \log q_\theta(x_{l})}{\partial x_{l}} \; \frac{\partial v_\theta}{\partial x_l}(x_l,l \Delta t) - \frac{\partial}{\partial x_l} \mathrm{Tr} \left[\frac{\partial v_\theta}{\partial x_l}\right] (x_l,l\Delta t) + \mathcal{O}(\Delta t) \,.
\end{align*}
Taking the limit $\Delta t \to 0$, we obtain precisely the ODE derived by \citet{vaitl2022path}:
\begin{align}
    \frac{\mathrm d}{\mathrm{d}t} \frac{\partial \log q_\theta(X_{t})}{\partial X_{t}} = - \frac{\partial \log q_\theta(X_{t})}{\partial X_{t}} \; \frac{\partial v_\theta}{\partial X_t}(X_t,t) - \frac{\partial}{\partial X_t} \mathrm{Tr} \left[\frac{\partial v_\theta}{\partial X_t}\right] (X_t,t) \,.
\end{align}
As a result, the fast path gradient derived in this manuscript unifies path gradient calculations of coupling flows with the analogous ones for CNFs.

\section{Algorithms}

In this section we state the different algorithms used in our work. As discussed in \Cref{sec:Path-FW-KL}, due to the duality of the KL divergence, we can employ the algorithms for both the forward and the reverse KL. \\

The different algorithms treated in this paper are:
\begin{enumerate}
    \item The novel fast path gradient algorithm --- shown in Algorithm \ref{alg:fast-path-grad}.
    \item As a baseline, the method proposed in \citet{vaitl2022gradients} --- shown in Algorithm \ref{alg:STL-path-grad}.
    \item As a further baseline, the same algorithm, amended to the GDReG estimator \citep{bauer2021generalized} --- shown in Algorithm \ref{alg:GDReG-path-grad}. We stress that this algorithm is an original proposal of this paper, however, we did not introduce it in detail in the main text as it is slower than our fast path gradient and therefore more of a side-product serving as a strong baseline.
\end{enumerate}
The algorithms do not contain the path gradients for the target density, since those can be done readily.
A graphical visualisation of the respective terms of the algorithms is provided in Figure \ref{fig:pathgrad_viz}.

\begin{algorithm}[!h]
    \caption{Fast Path Gradient: computation of $\blacktriangledown_\theta \log {q_{\theta}}(T_\theta(x_0)), x_0 \sim q_0$}
    \label{alg:fast-path-grad}
    \textbf{Input:} base sample $x_0 \sim q_0$ \\
    \textbf{for} $l$ in $\{0,\dots,L-1\}$:
    \Comment{\scriptsize joint forward pass and recursive equations}
    \\
    \hspace{3em}  Apply $T_{l+1, \theta_{l+1}}$ to compute $x_{l+1}$ \\
    \hspace{3em} Compute $\frac{\partial \log q_{\theta,l+1}(x_{l+1})}{\partial x_{l+1}}$ according to \Cref{prop: Recursive gradient for coupling flows}\\
    $\;\; \textbf{return} \;   \frac{\partial \log q_{\theta,L}(x_L) }{\partial x_L}\frac{\partial}{\partial \theta} x_L$
    \Comment{\scriptsize compute vector-Jacobian products}
\end{algorithm}

\begin{algorithm}[!h]
    \caption{Path gradient: computation of $\blacktriangledown_\theta \log q_{\theta}(T_\theta(x_0)), x_0 \sim q_0$}
    \label{alg:STL-path-grad}
    \textbf{Input:} base sample $x_0 \sim q_0$ \\
    $\;\;x' \gets \textrm{stop\_gradient}(T_\theta(x_0))$  \Comment{\scriptsize forward pass of $x_0$ through the flow without gradients}\\
    $\;\; q_{\theta}(x') \gets  q_0 (T_\theta^{-1}(x')) \left|\det \frac{\partial T_\theta^{-1}(x')}{\partial x'}\right|$  \Comment{\scriptsize reverse pass to calculate density}\\
    $\;\; G \gets  \frac{\partial \log(q_{\theta}(x')) }{\partial x'}$  \Comment{\scriptsize  compute gradient with respect to $x'$}\\
    $\;\;x \gets T_\theta(x_0)$ \Comment{\scriptsize standard forward pass}\\
    $\;\; \textbf{return} \;  G\frac{\partial}{\partial \theta} x $
    \Comment{\scriptsize compute vector-Jacobian products}

\end{algorithm}

\begin{algorithm}[!h]
    \caption{Path gradient: computation of $\blacktriangledown_\theta \log q_{\theta}( T_\theta(x_0)) |_{x_0=T^{-1}_\theta(x)}, x \sim p$}
    \label{alg:GDReG-path-grad}
    \textbf{Input:} target sample $x \sim p$ \\
    $\;\; q_{\theta}(x') \gets  q_0 (T_\theta^{-1}(x')) \left|\det \frac{\partial T_\theta^{-1}(x')}{\partial x'}\right|$  \Comment{\scriptsize  reverse pass to calculate density}\\
    $\;\; G \gets  \frac{\partial \log(q_{\theta}(x')) }{\partial x'}$  \Comment{\scriptsize  compute gradient with respect to $x'$}\\
    $\;\;x_0' \gets \textrm{stop\_gradient}(T^{-1}_\theta(x))$ \Comment{\scriptsize copy $T^{-1}_\theta(x)$ }\\
    $\;\;x \gets T_\theta(x_0')$ \Comment{\scriptsize standard forward pass}\\
    $\;\; \textbf{return} \;   G \frac{\partial}{\partial \theta} x$
    \Comment{\scriptsize compute vector-Jacobian products}
\end{algorithm}

\begin{figure}
    \centering
    \includegraphics[width=\textwidth]{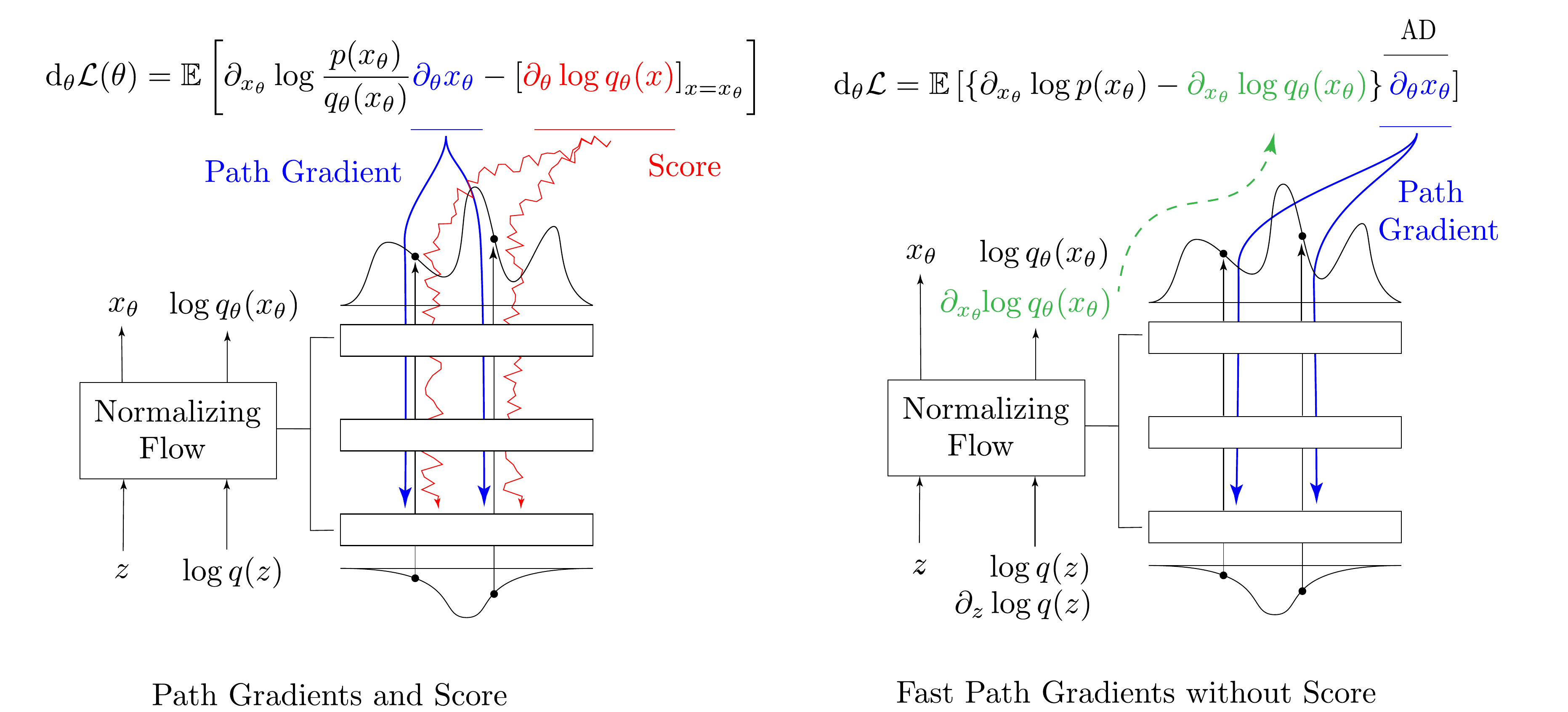}
    \caption{
        The gradient of the loss function $\text{d}_\theta \mathcal{L}(\theta)$ consists of the path gradient (blue) and a score term (red) --- the latter vanishes in expectation, but has non-vanishing variance.
        The path gradient framework computes the necessary quantities to perform stochastic gradient descent with only the path gradients, eliminating the impact of the score term which tends to increase the variance of the gradient leading to suboptimal gradient estimation.
        We propose a fast algorithm for computing the Path Gradient estimator which evaluates $\partial_{x_\theta} \log q_\theta(x_\theta)$ (green) alongside the forward pass.
        The term $\partial_{x_\theta} \log p(x_\theta)$ is assumed to be available for a given energy function within the problem formulation.
        We can then compute the derivative of the log ratio, $\partial_{x_\theta} \log\left( \nicefrac{p(x_\theta)}{q_\theta(x_\theta)}\right)$,
        which can be interpreted as a \emph{scaling} of the gradient $\partial_\theta x_\theta$, the gradient with respect to the parameters.
        Thus the gradient $\text{d}_\theta \mathcal{L}(\theta)$ only consist of the path gradient, eliminating the negative influence of the score term.
    }
    \label{fig:pathgrad_viz}
\end{figure}

\section{Computational Details}
\label{app: computational details}

In this section we elaborate on computational details.

\paragraph{Evaluation Metrics.} As common, we assess the approximation quality of the variational density $q_\theta$ by the effective sampling size (ESS), defined as
\begin{equation}
    \label{eq: ESS}
    \operatorname{ESS} := \frac{1}{\mathbb{E}_{x\sim q_\theta}[w^2(x)]} = \frac{1}{\mathbb{E}_{x\sim p}[w(x)]},
\end{equation}
where $w(x) := \frac{p(x)}{q_\theta(x)}$ are the importance weights.

As can be seen in \eqref{eq: ESS}, the ESS can be computed with samples from either $q_\theta$ or $p$ (if available), which leads to two different estimators, namely
\begin{equation}
    \widehat{\operatorname{ESS}}_q = \frac{N}{ \sum_{n=1}^N \widehat{w}^2_q(x^{(n)})}, \quad x^{(n)} \sim q_\theta,\qquad \qquad    \widehat{\operatorname{ESS}}_p = \frac{N}{\sum_{n=1}^N \widehat{w}_p(x^{(n)})}, \quad x^{(n)} \sim p,
\end{equation}
where the normalization constant $Z$ appearing in the importance weights $w$ can as well be approximated either with samples from $q_\theta$ or $p$, respectively, namely

\begin{equation}
    \widehat{w}_q(x) := \frac{e^{-S(x)}}{q_\theta(x) \widehat{Z}_q}, \qquad \widehat{Z}_q = \frac{1}{N} \sum_{n=1}^N \frac{e^{- S(x^{(n)})}}{q_\theta(x^{(n)}) }, \qquad x^{(n)} \sim q_\theta,
\end{equation}
or
\begin{equation}
    \widehat{w}_p(x) := \frac{e^{-S(x)}}{q_\theta(x) \widehat{Z}_p}, \qquad \widehat{Z}_p = \left(\frac{1}{N} \sum_{n=1}^N \frac{q_\theta(x^{(n)})}{e^{- S(x^{(n)})}}\right)^{-1}, \qquad x^{(n)} \sim p.
\end{equation}

Note that $\widehat{\operatorname{ESS}}_q$ might be biased due to a potential mode collapse of $q_\theta$, which is not the case for $\widehat{\operatorname{ESS}}_p$.
The ESS is a measure of how efficiently one can sample from the target distribution. E.g., an ESS of $0.5$ means that if we have $N$ samples from the sampling distribution $q_\theta$, the variance of the reweighted estimator is large as an estimator using $0.5 N$ samples from the target distribution. A maximum ESS of $1$ occurs when the importance weights $\widehat w$ are exactly $1$ for every sample, the minimum ESS is $0$.
\paragraph{Gradient Estimators.}
As baselines for the path gradients we used the Maximum Likelihood gradient estimator as the Forward KL Standard Gradient estimator
\begin{align}
    \frac{\mathrm d}{\mathrm d \theta} \KL(p|q_\theta) & \approx -\frac 1 N \sum_{n=1}^N \frac{\partial}{\partial \theta}\log q_{\theta}(x^{(n)}),\qquad x^{(n)} \sim p \, ,
\end{align}
and for the reverse KL we used the reparametrization trick gradients

\begin{align}
    \frac{\mathrm d}{\mathrm d \theta}  \KL(q_\theta|p) & \approx \frac 1 N \sum_{n=1}^N  \frac{\mathrm d}{\mathrm d \theta} \left( \log \frac{q_{\theta}}{ p}(T_\theta(x_{0}^{(n)}))\right),\qquad x_{0}^{(n)} \sim q_0 \, .
\end{align}
Note that both gradients are independent of the normalization constants of both $q_\theta$ and $p$.

\paragraph{Gaussian Mixture Model.}

We use a RealNVP \cite{dinh2016density} flow with weight normalization.
The RealNVP flow contains 1,000 hidden neurons per layer and six coupling layers, each of which consists of 6 hidden neural networks layers with Tanh activation to generate the affine coupling parameters.
We draw 10,000 samples from the Gaussian mixture model (GMM) for the forward KL training, thus mimicking a finite yet large sample set.

The superior performance of path gradients for Gaussian Mixture Models can be observed in Figure ~\ref{fig:gmmfess}.
In case of the reverse KL in the left plot, both path gradients and standard gradients achieve the same forward ESS, yet the path gradients converge faster in wall time, despite their increased computational cost of $40\%$ compared to the standard gradients.

The middle figure shows the improved performance of the forward KL path gradients compared to the standard non-path gradient.
The forward KL path gradients converge faster to a high forward ESS and are able to maintain the achieved sampling efficiency compared to the standard gradients which decrease to a forward ESS of zero.

Finally, the right plot examines the different performance of forward KL path gradients in more detail by increasing the number of training samples on which the forward KL is optimized.
The advantage of path gradients becomes evident even more as they generally surpass standard gradients earlier and achieve higher ultimate forward effective sampling sizes even for very large forward KL training sets.

Besides the forward ESS, the corresponding negative Loglikelihood (NLL) for both gradients and path gradients is plotted in Figure ~\ref{fig:gmmfess_datasensitivity} for an increasing number of training samples.
One observes how the NLL for the training and test data set differ only slightly compared to the performance gap of standard gradients measured in terms of NLL.
Path gradients are robust even in low data regimes where standard gradients diverge strongly in terms of their training and test performance, indicating a tendency to overfit.

The results in the experiments in Figures ~\ref{fig:gmmfess} and ~\ref{fig:gmmfess_datasensitivity} are obtained after $10,000$ optimization steps, with a learning rate of $0.00001$ with the Adam optimizer \citep{kingma2014adam} with a batch size of $4,000$.
The target distribution $p$ is identical to the setup in \Cref{sec: numerics} and the forward ESS and NLL are evaluated with $10,000$ test samples.

Finally, Figure ~\ref{fig:gmmfess_2D-plot} illustrates the performance gap between standard gradients and path gradients on a simple two dimensional multivariate Normal distribution.
The visualization hints at a stronger regularization for path gradients which are able to incorporate gradient information of the underlying ground truth energy function.

Figures \ref{fig:sweep_2linlayers}, \ref{fig:sweep_4linlayers} and \ref{fig:sweep_6linlayers} show the $\operatorname{ESS}_p$ over the course of the optimization for varying numbers of linear layers per coupling block, number of hidden neurons per linear layer and the batch size used for optimization trained with Forward KL Path Gradients and non-path Forward KL Gradients.
While standard Forward KL gradients can \emph{relatively} surpass Forward KL Path Gradients mildly in a few cases for smaller models, a larger model trained with Path Gradients doubles the mean of the $\operatorname{ESS}_p$ in a direct comparison and improves upon the best Forward KL Gradients in \emph{absolute} terms.
Ultimately, the best performance as measured with the $\operatorname{ESS}_p$ is achieved with larger models trained with path gradients.
Importantly, path gradients provide increased robustness against overfitting as can be seen in the performance during training.
Standard gradients for larger models tend to deteriorate their $\operatorname{ESS}_p$.

The Tables \ref{tab:sweep_2linlayers}, \ref{tab:sweep_4linlayers} and \ref{tab:sweep_6linlayers} collect the best $\operatorname{ESS}_p$ achieved during optimization for the same combination of number of linear layers, number of hidden neurons per linear layer and batch size.
This corresponds to saving checkpoints during training and choosing the respective model with the best $\operatorname{ESS}_p$.
The Forward KL Path Gradients outperform their non-path counterpart except for a few instances, which perform worse than larger models as measured in $\operatorname{ESS}_p$ and trained with path gradients.

\begin{figure}[h!]
    \centering
    \includegraphics[width=\textwidth]{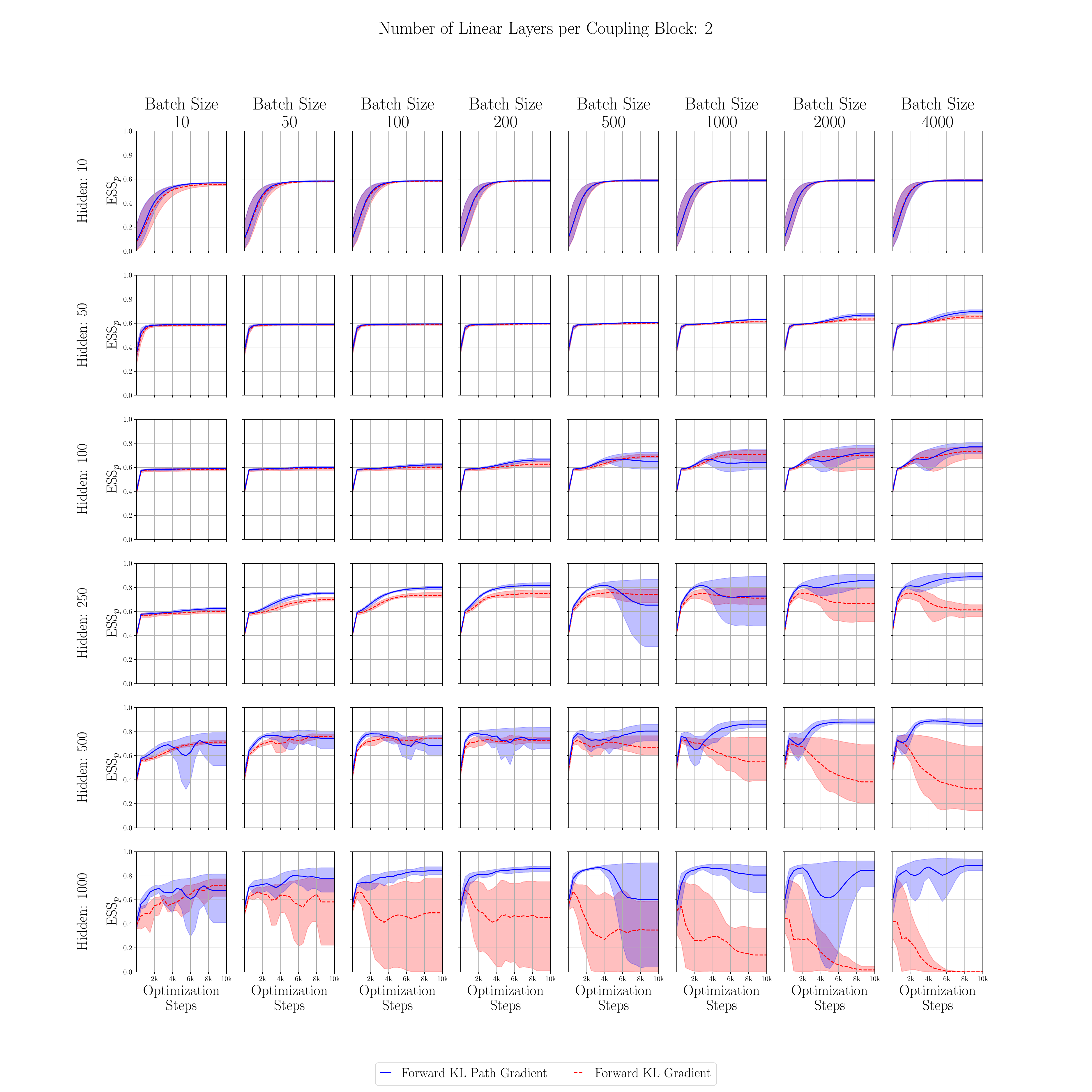}
    \caption{
        The $\operatorname{ESS}_p$ of a RealNVP flow with two linear layers in each of its six couplings blocks trained with Forward KL Gradients shown in red and Forward KL Path Gradients shown in blue.
        For higher model capacity and larger batch sizes, the Forward KL Path Gradients achieve higher absolute $\operatorname{ESS}_p$ while the Forward KL Gradients collapse with increasing model capacity.
        The rows increase width of the linear layers (hidden neurons) and the columns increase the batch size used during optimization.
    }
    \label{fig:sweep_2linlayers}
\end{figure}

\begin{table}[h!]
    \begin{tabular}{ccc|cc|cc|cc|cc|cc}
        \toprule
        Layer Width & \multicolumn{2}{c}{10} & \multicolumn{2}{c}{50} & \multicolumn{2}{c}{100} & \multicolumn{2}{c}{250} & \multicolumn{2}{c}{500} & \multicolumn{2}{c}{1000}                                                                               \\
        \cmidrule(lr){2-13}
        Batch Size  & Std                    & Path                   & Std                     & Path                    & Std                     & Path                     & Std  & Path          & Std  & Path          & Std           & Path          \\
        \midrule
        10          & 55.7                   & \textbf{56.9}          & 58.5                    & \textbf{58.8}           & 58.5                    & \textbf{58.8}            & 60.0 & \textbf{62.4} & 71.3 & \textbf{72.7} & \textbf{72.1} & 71.7          \\
        50          & 58.0                   & \textbf{58.3}          & 58.9                    & \textbf{59.2}           & 59.3                    & \textbf{60.0}            & 69.8 & \textbf{75.2} & 76.0 & \textbf{77.2} & 66.6          & \textbf{80.5} \\
        100         & 58.3                   & \textbf{58.6}          & 59.1                    & \textbf{59.4}           & 60.2                    & \textbf{62.1}            & 73.4 & \textbf{79.7} & 75.1 & \textbf{78.3} & 66.3          & \textbf{84.1} \\
        200         & 58.5                   & \textbf{58.7}          & 59.4                    & \textbf{59.7}           & 62.7                    & \textbf{66.2}            & 75.1 & \textbf{81.7} & 73.7 & \textbf{78.4} & 68.3          & \textbf{86.1} \\
        500         & 58.6                   & \textbf{58.8}          & 60.0                    & \textbf{60.8}           & \textbf{68.9}           & 66.8                     & 75.7 & \textbf{81.8} & 73.0 & \textbf{80.5} & 67.0          & \textbf{86.9} \\
        1000        & 58.7                   & \textbf{58.9}          & 61.1                    & \textbf{63.0}           & \textbf{70.8}           & 66.8                     & 75.0 & \textbf{81.5} & 72.5 & \textbf{86.2} & 54.7          & \textbf{86.9} \\
        2000        & 58.7                   & \textbf{58.9}          & 63.5                    & \textbf{66.7}           & 69.9                    & \textbf{72.0}            & 75.2 & \textbf{85.7} & 69.8 & \textbf{88.0} & 44.5          & \textbf{86.6} \\
        4000        & 58.8                   & \textbf{59.0}          & 65.3                    & \textbf{69.5}           & 73.4                    & \textbf{77.0}            & 75.4 & \textbf{88.9} & 72.1 & \textbf{88.9} & 41.8          & \textbf{88.4} \\
        \bottomrule
    \end{tabular}
    \caption{
        We summarize the highest achieved average $ESS_p$ over the entire optimization which corresponds to the best attainable performance in terms of $\operatorname{ESS}_p$ possible for that model capacity. The flow consists of 6 coupling layers, each with 2 linear layers, each of which has an increasing width (number of neurons).
    }
    \label{tab:sweep_2linlayers}
\end{table}

\begin{figure}[h!]
    \centering
    \includegraphics[width=\textwidth]{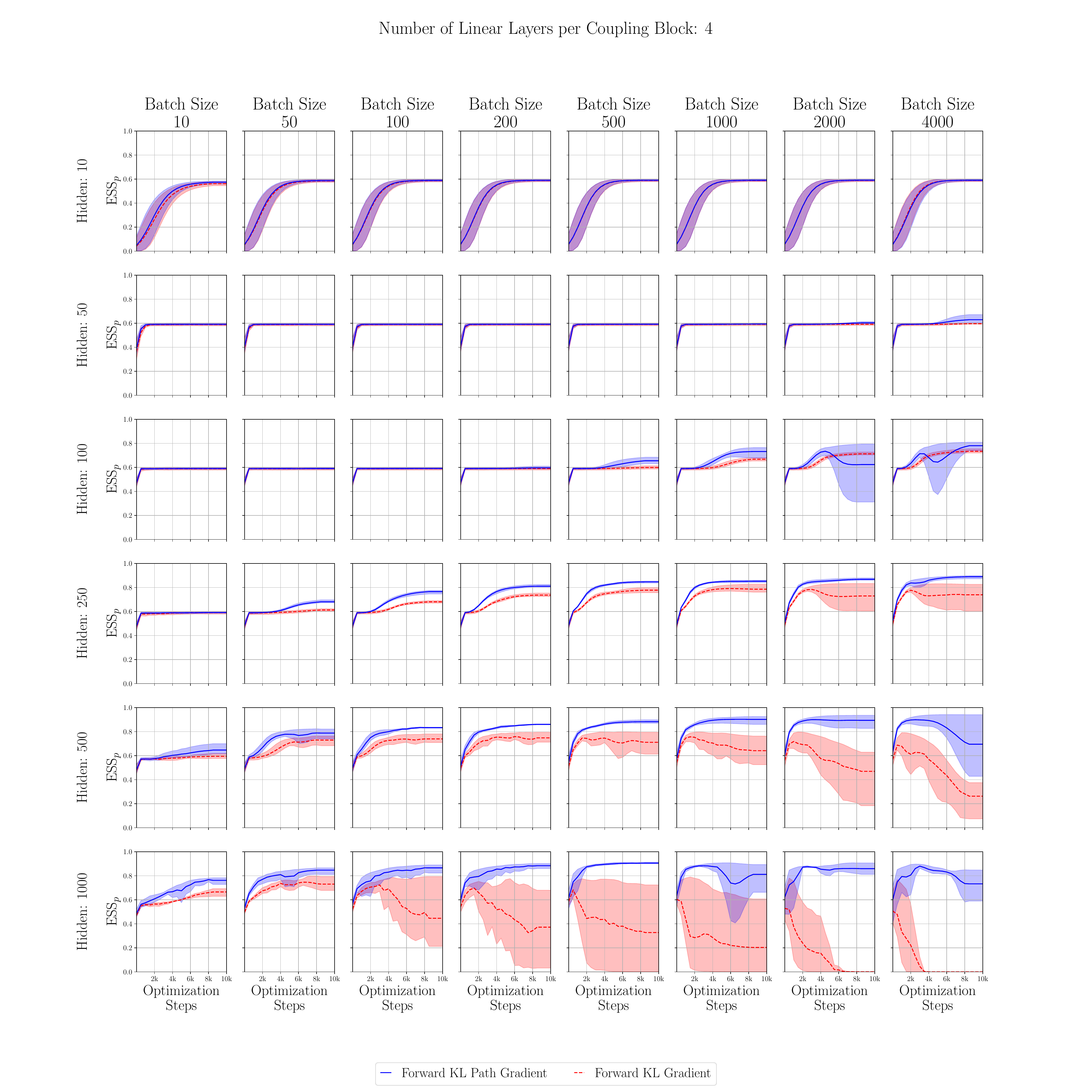}
    \caption{
        For many combinations of increasing parameterization and increasing batch size, Forward KL Path Gradients are able to maintain their high $\operatorname{ESS}_p$, while performance of Forward KL Gradients increasingly deteriorates.
        The rows increase width of the linear layers (hidden neurons) and the columns increase the batch size used during optimization.
    }
    \label{fig:sweep_4linlayers}
\end{figure}

\begin{table}[h!]
    \begin{tabular}{ccc|cc|cc|cc|cc|cc}
        \toprule
        Layer Width & \multicolumn{2}{c}{10} & \multicolumn{2}{c}{50} & \multicolumn{2}{c}{100} & \multicolumn{2}{c}{250} & \multicolumn{2}{c}{500} & \multicolumn{2}{c}{1000}                                                                      \\
        \cmidrule(lr){2-13}
        Batch Size  & Std                    & Path                   & Std                     & Path                    & Std                     & Path                     & Std  & Path          & Std  & Path          & Std  & Path          \\
        \midrule
        10          & 56.2                   & \textbf{57.4}          & 59.0                    & \textbf{59.2}           & 59.0                    & \textbf{59.1}            & 59.1 & \textbf{59.2} & 59.5 & \textbf{64.7} & 66.5 & \textbf{76.8} \\
        50          & 58.5                   & \textbf{58.8}          & 59.0                    & \textbf{59.2}           & 59.0                    & \textbf{59.1}            & 61.3 & \textbf{68.1} & 73.1 & \textbf{78.9} & 74.7 & \textbf{84.7} \\
        100         & 58.7                   & \textbf{58.9}          & 59.0                    & \textbf{59.2}           & 59.0                    & \textbf{59.1}            & 68.1 & \textbf{76.6} & 74.0 & \textbf{83.4} & 72.4 & \textbf{86.6} \\
        200         & 58.9                   & \textbf{59.0}          & 59.0                    & \textbf{59.2}           & 59.1                    & \textbf{59.7}            & 73.7 & \textbf{81.1} & 75.9 & \textbf{86.0} & 69.8 & \textbf{88.4} \\
        500         & 58.9                   & \textbf{59.1}          & 59.0                    & \textbf{59.2}           & 59.9                    & \textbf{65.5}            & 77.8 & \textbf{84.7} & 74.7 & \textbf{88.3} & 68.3 & \textbf{90.5} \\
        1000        & 59.0                   & \textbf{59.1}          & 59.1                    & \textbf{59.3}           & 66.8                    & \textbf{73.2}            & 79.1 & \textbf{85.2} & 75.7 & \textbf{90.3} & 60.4 & \textbf{88.3} \\
        2000        & 59.0                   & \textbf{59.1}          & 59.2                    & \textbf{60.3}           & 71.3                    & \textbf{73.4}            & 78.4 & \textbf{86.9} & 71.8 & \textbf{90.0} & 52.9 & \textbf{87.6} \\
        4000        & 59.0                   & \textbf{59.1}          & 59.6                    & \textbf{62.9}           & 73.4                    & \textbf{78.0}            & 77.7 & \textbf{89.0} & 68.8 & \textbf{89.9} & 50.7 & \textbf{88.0} \\
        \bottomrule
    \end{tabular}
    \caption{
        We summarize the highest achieved average $\operatorname{ESS}_p$ over the entire optimization which corresponds to the best attainable performance in terms of $\operatorname{ESS}_p$ possible for that model capacity.
        The flow consists of 6 coupling layers, each with 4 linear layers, each of which has an increasing width (number of neurons).
    }
    \label{tab:sweep_4linlayers}
\end{table}

\begin{figure}[h!]
    \centering
    \includegraphics[width=\textwidth]{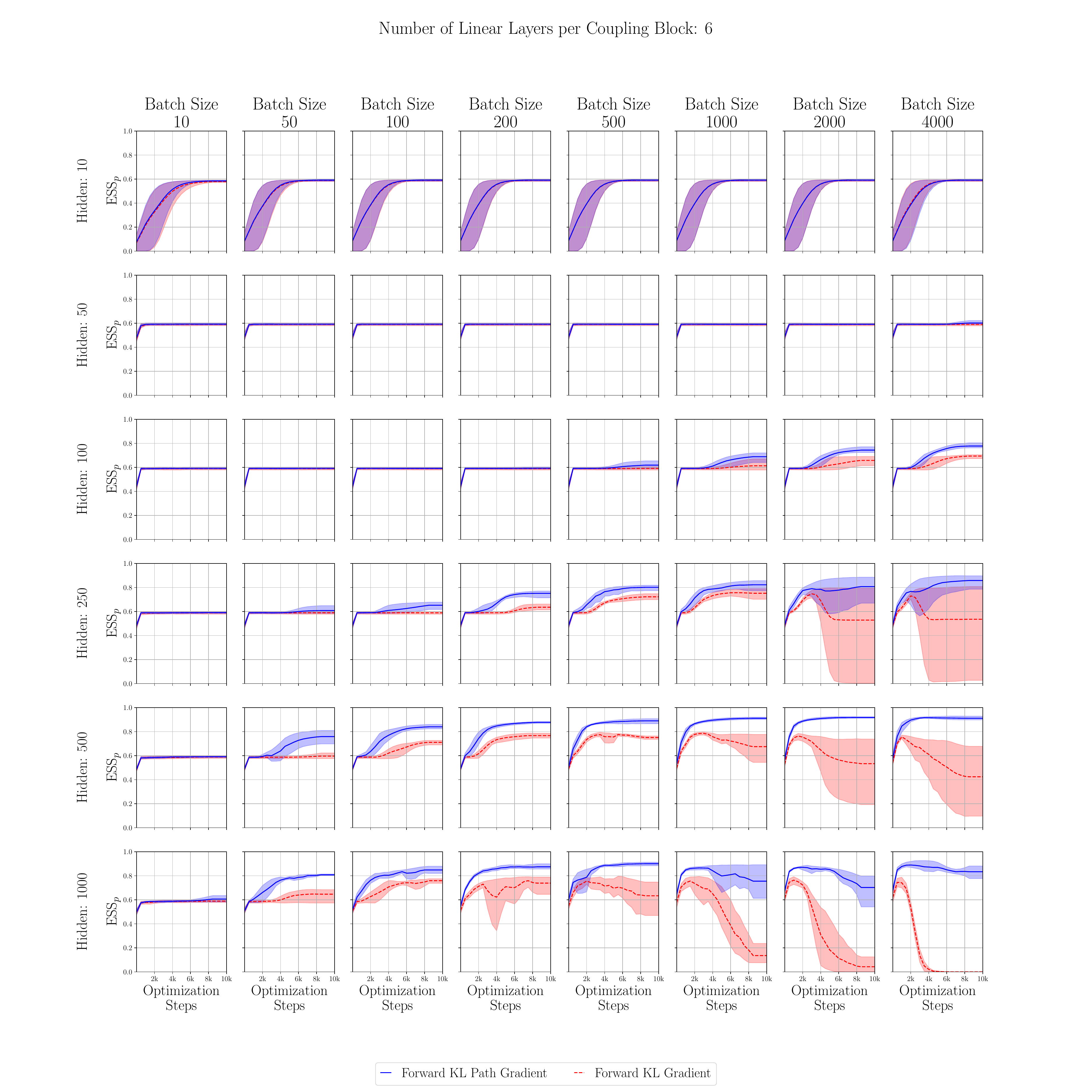}
    \caption{
        The higher model capacity compared to models with fewer linear layers per coupling block increases the performance of Forward KL Path Gradients in terms of $\operatorname{ESS}_p$ while Forward KL Gradients deteriorate, resulting in a final $\operatorname{ESS}_p = 0.0\%$ for the model with the largest capacity trained with the largest batch size.
        The rows increase width of the linear layers (hidden neurons) and the columns increase the batch size used during optimization.
    }
    \label{fig:sweep_6linlayers}
\end{figure}

\begin{table}[h!]
    \begin{tabular}{ccc|cc|cc|cc|cc|cc}
        \toprule
        Layer Width & \multicolumn{2}{c}{10} & \multicolumn{2}{c}{50} & \multicolumn{2}{c}{100} & \multicolumn{2}{c}{250} & \multicolumn{2}{c}{500} & \multicolumn{2}{c}{1000}                                                                      \\
        \cmidrule(lr){2-13}
        Batch Size  & Std                    & Path                   & Std                     & Path                    & Std                     & Path                     & Std  & Path          & Std  & Path          & Std  & Path          \\
        \midrule
        10          & 57.9                   & \textbf{58.5}          & 59.1                    & \textbf{59.3}           & 59.1                    & \textbf{59.2}            & 59.0 & \textbf{59.1} & 59.0 & \textbf{59.1} & 59.0 & \textbf{60.6} \\
        50          & 58.9                   & \textbf{59.1}          & 59.1                    & \textbf{59.3}           & 59.1                    & \textbf{59.2}            & 59.0 & \textbf{60.8} & 59.7 & \textbf{75.9} & 64.8 & \textbf{80.8} \\
        100         & 59.0                   & \textbf{59.1}          & 59.1                    & \textbf{59.3}           & 59.1                    & \textbf{59.2}            & 59.0 & \textbf{65.2} & 71.1 & \textbf{84.0} & 75.9 & \textbf{84.8} \\
        200         & 59.1                   & \textbf{59.2}          & 59.1                    & \textbf{59.3}           & 59.1                    & \textbf{59.3}            & 63.6 & \textbf{75.1} & 76.7 & \textbf{87.8} & 75.9 & \textbf{87.5} \\
        500         & 59.1                   & \textbf{59.2}          & 59.1                    & \textbf{59.3}           & 59.1                    & \textbf{61.9}            & 72.2 & \textbf{80.1} & 77.6 & \textbf{89.0} & 75.6 & \textbf{90.1} \\
        1000        & 59.1                   & \textbf{59.2}          & 59.1                    & \textbf{59.2}           & 61.3                    & \textbf{68.9}            & 75.7 & \textbf{82.3} & 78.6 & \textbf{91.1} & 75.6 & \textbf{86.5} \\
        2000        & 59.1                   & \textbf{59.2}          & 59.1                    & \textbf{59.2}           & 65.8                    & \textbf{74.4}            & 74.7 & \textbf{80.7} & 76.4 & \textbf{91.8} & 76.4 & \textbf{87.0} \\
        4000        & 59.1                   & \textbf{59.2}          & 59.1                    & \textbf{60.2}           & 69.4                    & \textbf{77.8}            & 73.1 & \textbf{85.8} & 75.7 & \textbf{91.6} & 74.6 & \textbf{88.9} \\
        \bottomrule
    \end{tabular}
    \caption{
        We summarize the highest achieved average $\operatorname{ESS}_p$ over the entire optimization which corresponds to the best attainable performance in terms of $\operatorname{ESS}_p$ possible for the corresponding model capacity.
        The flow consists of 6 coupling layers, each with 6 linear layers each of which has an increasing width (number of neurons).
    }
    \label{tab:sweep_6linlayers}
\end{table}

\begin{figure}
    \centering
    \includegraphics[width=\textwidth]{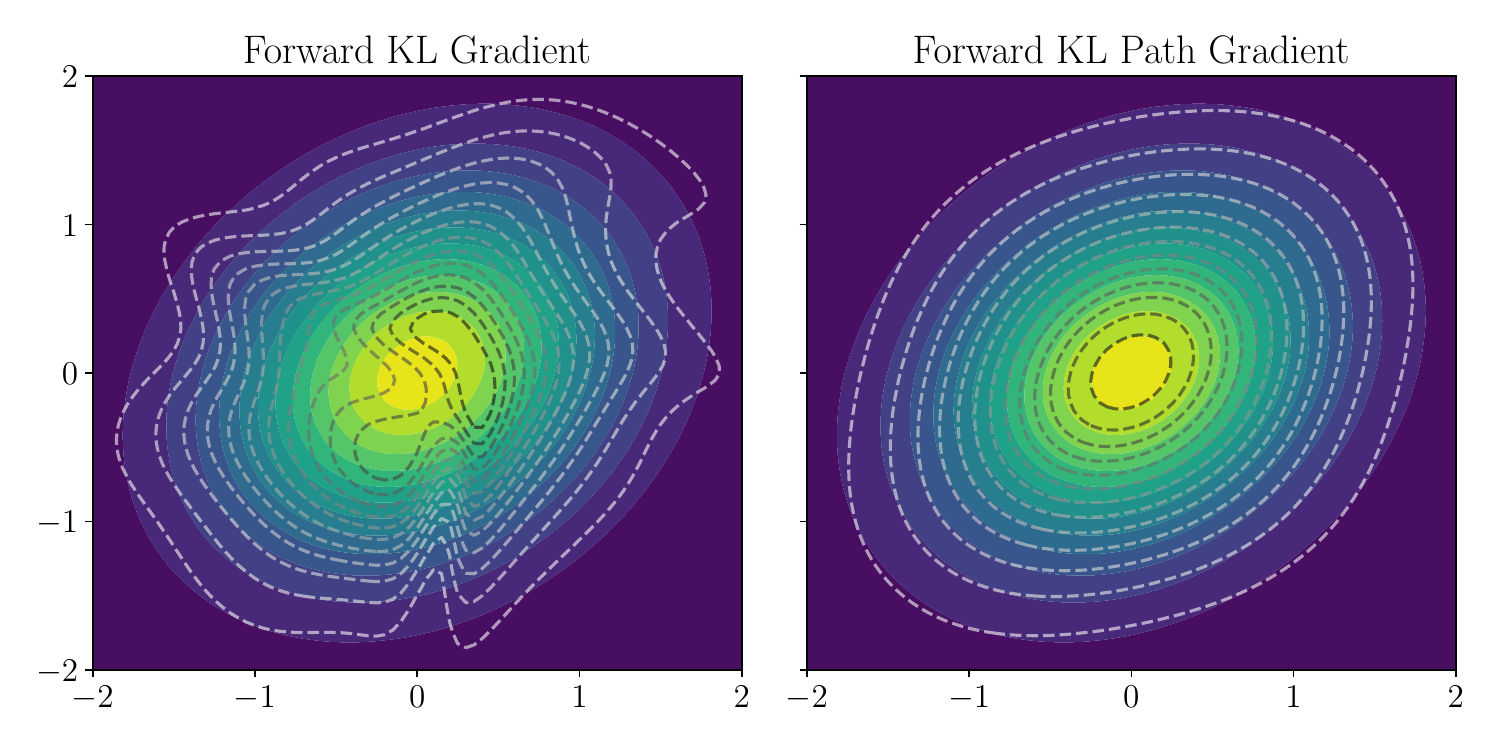}
    \caption{We visualize the contours of the learned probability distribution $q_\theta$ over the true contour plot of $p$ which is a two dimensional multivariate Gaussian distribution centered at $0$ with $0.5$ on the diagonal entries of the covariance matrix and $0.25$ on the off-diagonal entries.
        The RealNVP is trained with $750$ samples from $p$ with the forward KL divergence.
        While the forward KL Path Gradient is able to recover the true distribution well, the standard gradient exhibits numerous irregularities for the same training samples.}
    \label{fig:gmmfess_2D-plot}
\end{figure}

\begin{figure}
    \centering
    \includegraphics[width=0.8\textwidth]{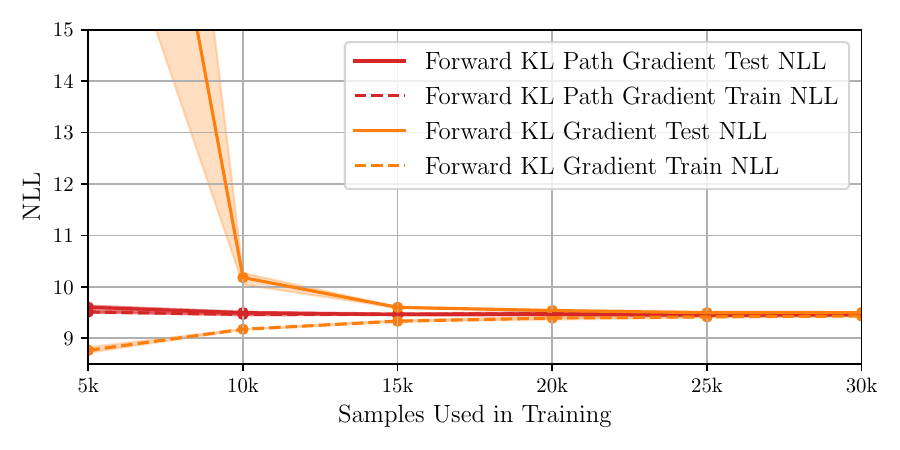}
    \caption{We plot the train and test negative Loglikelihood (NLL) intervals denote the minimum and maximum performance over $5$ runs. We see that the path gradient training consistently outperforms standard training in terms of NLL.
        The Forward KL Path Gradients is less prone to overfitting and is able to maintain a steady test NLL.
        Already with comparitively few data samples, path gradients converge to the test set NLL.
        In data regimes with little data, forward KL gradients are prone to overfitting on the training data compared to forward KL path gradients.}
    \label{fig:gmmfess_datasensitivity}
\end{figure}

\newpage
\paragraph{$\phi^4$ Field Theory.}

For our flow architecture we use a slightly modified NICE \citep{dinh2014nice} architecture, called Z2Nice \citep{nicoli2020asymptotically}, which is equivariant with respect to the $\mathbb{Z}_2$ symmetry of the $\phi^4$ action in \eqref{eq: phi-4 action}.
We use a lattice of extent $16 \times 8$, a learning rate of $0.0005$, batch size $8,000$, AltFC coupling, $8$ coupling blocks with $4$ hidden layers each.
A learning rate decay with patience of $3,000$ epochs is applied. We used global scaling, Tanh activation and hidden width $1,000$.
As base-density we chose a Normal distribution $q_0=\mathcal N(0,1)$. Gradient clipping with norm=$1$ is applied.
Just like in \citet{nicoli2023detecting}, training is done on $50$ million samples.
Optimization is performed for 48h on a single A100 each, which leeds to up to 1.5 million steps for the standard gradient estimators and 1.1 million epochs with the fast path algorithm.

\paragraph{$U(1)$ Gauge Theory.}

The $U(1)$ experiments are based on the experimental design in \citet{kanwar2020equivariant}.
We consider a lattice with $16^2$ sites with a batch size of $12,288$, learning rate $0.0001$, $24$ coupling blocks with a NCP \citep{rezende2020normalizing} with $6$ mixtures, hidden size $8\times8$ and kernel size $3$ and a uniform base-density $q_0=\mathcal U(0, 2 \pi)$.
We train our models on an A100 for one week, the batch size was chosen, so as to maximize GPU-RAM for a single GPU.
Because training from random initialization led to very high variance in performance, we pre-train the model for $\beta=3.0$ for 200,000 epochs.
The shown benchmarks show the training after initializing from the pretrained model for target beta $\beta=3.0$. For the standard reverse KL gradient estimator (using reparametrization trick) this yields 700,000 and for the fast path gradients 300,000 epochs.

The results can be seen in Figure \ref{fig:u1-beta3}; they are averaged over a running average of window size 3 and 4 repetitions, the mean and standard error are shown. The ESS is estimated on 10 $\times$ \textit{batch size} samples.

\paragraph{Runtimes.}
In order to give a fair comparison for the runtimes, we use the hyperparameters of the experiments in this work as a testing ground.
Namely for the affine coupling flow, we use the $\phi^4$ experiments and for the implicitly invertible flows, we use the $U(1)$ experiments.
Walltime runtimes are measured on an A100 GPU with 1,000 repetitions.\\
For the affine flows, we use the setup from the $\phi^4$ experiments and look at the existing Algorithm~\ref{alg:STL-path-grad} for computing path gradient, the proposed fast path Algorithm~\ref{alg:fast-path-grad}, as well as the equally new Algorithm~\ref{alg:GDReG-path-grad}, which uses the insights of \Cref{sec:Path-FW-KL} to speed up Algorithm~\ref{alg:STL-path-grad}.
Here, Algorithm~\ref{alg:fast-path-grad} for the fast path gradient uses the recursive equation \eqref{eq:temp1}.
The results can be seen in the upper rows of \Cref{tab:runtimes}.

For the implicitly invertible flows, we use the setup of the $U(1)$ experiments. Since the flows are not easily invertible, a significant percentage of the time is spent on the root finding algorithm, which is implemented as the bisection method.
The root finding algorithm employs an absolute error tolerance which determines when the recursive search stops.
Each iteration of the bisection method requires one evaluation of the function, in our case a normalizing flow.

Using backpropagation through the bisection is not only error-prone, but also costly in compute and memory.
Recently, \citet{kohler2021smooth} proposed circumventing the backpropagation through the root finding via the implicit function theorem.
Both of these methods can be combined with the existing path gradient Algorithm~\ref{alg:STL-path-grad}.
Due to the increase of computational cost and numerical error of the root finding algorithm, these methods are outperformed in runtime and precision by our proposed Algorithm~\ref{alg:fast-path-grad}. As the tolerance in root-finding, \citet{kohler2021smooth} chose a value of $1e^{-6}$ for testing the runtime and during training on other benchmarks, they chose a tolerance of $2e^{-6}$.
Results for both of these tolerances are shown in the lower rows of \Cref{tab:runtimes}.
Numerical errors are computed on 50 batches, each with 64 samples.

We show the behavior of the baseline Algorithm~\ref{alg:STL-path-grad} with bisection and gradient computation as proposed by \citet{kohler2021smooth} in \Cref{fig:runtimes U1}.
We can see that our proposed method outperforms the baseline irrespective of the chosen hyperparameters. For single-precision floating point, the numerical error can only be reduced so far, after double-precision has to be employed which leads to a drastic increase in runtime and memory.

\begin{table}[]
    \centering
    \begin{tabular}{l|ll|c|rrrr}
        \hline
                                                                                           & Algorithm                                               &                & Error               & \multicolumn{3}{c}{Runtime increase (batch size)}                                           \\
                                                                                           &                                                         &                & $\times$ 1e-7       & 64                                                & 1024               & 8,192              \\
        \hline
        \parbox[t]{2mm}{\multirow{3}{*}{\rotatebox[origin=c]{90}{\scriptsize Explicitly}}} & Alg. \ref{alg:fast-path-grad} (ours)                    &                & -                   & \bf{ 1.6 $\pm$ 0.1}                               & \bf{1.4 $\pm$ 0.1} & \bf{1.4 $\pm$ 0.0} \\
                                                                                           & Alg. \ref{alg:STL-path-grad} \citep{vaitl2022gradients} &                & -                   & 2.1 $\pm$ 0.1                                     & 2.2 $\pm$ 0.1      & 2.1 $\pm$ 0.0      \\
                                                                                           & Alg. \ref{alg:GDReG-path-grad} (ours)                   &                & -                   & 1.8 $\pm$ 0.0                                     & 1.9 $\pm$ 0.1      & 1.8 $\pm$ 0.0      \\
        \hline
        \hline
        \parbox[t]{2mm}{\multirow{5}{*}{\rotatebox[origin=c]{90}{\scriptsize Implicitly}}} & Alg. \ref{alg:fast-path-grad} (ours)                    &                & \bf{2.0 $\pm$ 0.4}  & \bf{2.2 $\pm$ 0.0}                                & \bf{2.0 $\pm$ 0.1} & \bf{2.3 $\pm$ 0.0} \\
        \cline{2-7}
                                                                                           & Black-box root finding                                  & abs tol        &                                                                                                                   \\
        \cline{2-7}
                                                                                           & Alg. \ref{alg:STL-path-grad} + Autodiff                 & 2e-6           & $>$100,000                                                                                                        
                                                                                           & 19.2 $\pm$ 0.3                                          & 11.8 $\pm$ 0.1 & {\small Out of Mem}                                                                                               \\
                                                                                           & Alg. \ref{alg:STL-path-grad} + \citet{kohler2021smooth} & 2e-6           & 29.5 $\pm$ 13.5     & 4.8 $\pm$ 0.0                                     & 3.4 $\pm$ 0.0      & 3.4 $\pm$ 0.0      \\
                                                                                           & Alg. \ref{alg:STL-path-grad} + \citet{kohler2021smooth} & 1e-6           & 4.1 $\pm$ 1.1       & 17.5 $\pm$ 0.2                                    & 11.0 $\pm$ 0.1     & 8.2 $\pm$ 0.0      \\
        \hline
    \end{tabular}
    \caption{Factor of runtime increase in comparison to standard gradient estimator, i.e., $\nicefrac{\textrm{runtime path gradient}}{\textrm{runtime standard gradient}}$ on A100-80GB. (For error sterr and for runtime std is shown.)}
    \label{tab:runtimes}
\end{table}

\begin{figure}
    \centering
    \includegraphics[width=0.8\textwidth]{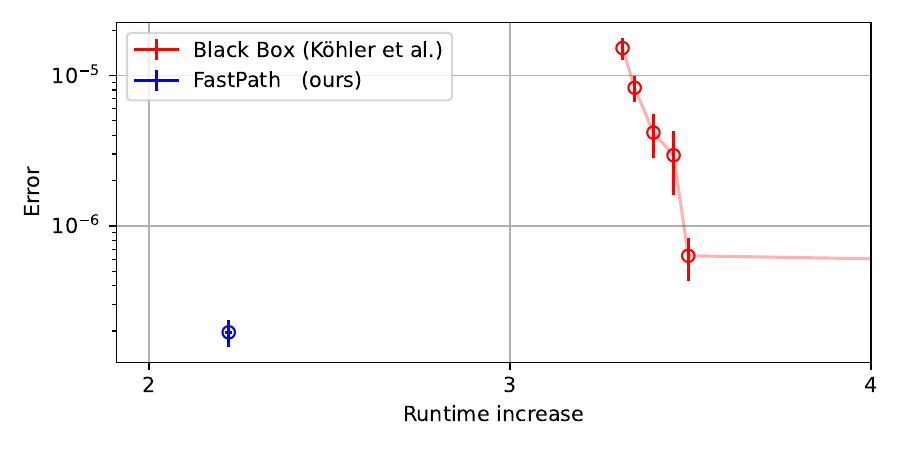}
    \caption{Runtime vs Precision trade-off in \citet{kohler2021smooth} $U(1)$ experiments with lattice $16^2$}
    \label{fig:runtimes U1}
\end{figure}

\section{Contributions}

\begin{tabular}{lcccc}
    \hline
                                & LV       & LW       & \\ \hline
    Conceptualization           & $\times$ &            \\
    Methodology                 & $\times$ &            \\
    Formal Analysis             & $\times$ &            \\
    Software                    & $\times$ & $\times$   \\
    Investigation               & $\times$ & $\times$   \\
    Visualization               & $\times$ & $\times$   \\
    Writing - Original Draft    & $\times$ & $\times$   \\
    Writing - Review \& Editing & $\times$ &            \\
\end{tabular}

\end{document}